\DeclareRobustCommand{\wrt}{w.r.t.\@\xspace}
\newtheorem{theorem}{Theorem}
\newtheorem{remark}{Remark}
\newtheorem{lemma}{Lemma}
\newcommand{\abar}{\bar{A}}
\newcommand{\E}{\mathbb{E}}
\newcommand{\xtpast}{t-1:t-L}
\newcommand{\ytpast}{t-1:t-M}
\title{Causal Feature Selection via Transfer Entropy}
\author {
    Paolo Bonetti\textsuperscript{\rm 1},
    Alberto Maria Metelli\textsuperscript{\rm 1},
    Marcello Restelli\textsuperscript{\rm 1}
}
\begin{document}

\algnewcommand\algorithmicforeach{\textbf{for each}}
\algdef{S}[FOR]{ForEach}[1]{\algorithmicforeach\ #1\ \algorithmicdo}
\renewcommand{\algorithmicrequire}{\textbf{Input:}}
\renewcommand{\algorithmicensure}{\textbf{Output:}}

\maketitle

\begin{abstract}
Machine learning algorithms are designed to capture complex relationships between features. In this context, the high dimensionality of data often results in poor model performance, with the risk of overfitting. Feature selection, the process of selecting a subset of relevant and non-redundant features, is, therefore, an essential step to mitigate these issues. However, classical feature selection approaches do not inspect the causal relationship between selected features and target, which can lead to misleading results in real-world applications. Causal discovery, instead, aims to identify causal relationships between features with observational data. In this paper, we propose a novel methodology at the intersection between feature selection and causal discovery, focusing on time series. We introduce a new causal feature selection approach that relies on the forward and backward feature selection procedures and leverages transfer entropy to estimate the causal flow of information from the features to the target in time series. Our approach enables the selection of features not only in terms of mere model performance but also captures the causal information flow. 
In this context, we provide theoretical guarantees on the regression and classification errors for both the exact and the finite-sample cases. Finally, we present numerical validations on synthetic and real-world regression problems, showing results competitive w.r.t. the considered baselines. 
\end{abstract}

\section{Introduction}
Machine learning~\citep[ML,][]{bishop2006} techniques are designed for predictive modelling, data analysis, and decision-making and are applied in numerous fields such as medicine, finance, and social sciences. One of the most significant challenges in ML is dealing with \emph{observational} data, which is collected without controlling for the variables of interest or the confounding variables. This type of data can lead to biased results, hindering the generalizability and interpretability of the ML models. To reduce the number of dimensions and avoid memory and modelling issues like the curse of dimensionality, ML models are trained over a set of variables identified by \emph{feature selection}~\citep{li2017feature}, which usually chooses the features with the largest association with the target, without inspecting the \emph{causal} relationships ~\citep{peters2017elements}. 

\emph{Causal discovery} is the process of identifying causal relationships among variables from observational data without external interventions~\citep{spirtes2000,eberhardt2017}. Its primary purpose is to undercover the direction of causality and estimate the magnitude and uncertainty of causal effects. This approach not only helps in developing more accurate and robust models but also provides a deeper understanding of the underlying mechanisms governing the data~\citep{pearl2009}. 

The importance of observational causal discovery for time series data in ML lies on the fact that a model trained on causal features enables researchers to predict the outcomes of an intervention or a policy change in a robust and interpretable way~\citep{scholkopf2022causality}. This is particularly useful in situations where it is not feasible or ethical to intervene, such as in healthcare, social sciences, or economics~\citep{imbens2015causal}. Furthermore, considering a physical system such as climate science, the causal knowledge of direct and indirect effects among variables can help to understand the physical system, filtering the effect of spurious correlations. Causal discovery can also guide the choice of the features by performing a \emph{causal} feature selection that identifies those with the most relevant flow of information to the target. However, causal discovery for time series is challenging on complex systems due to unobserved confounders, high dimensionality, autocorrelated variables, and time lags~\citep{runge2019inferring}. Moreover, causal discovery algorithms usually focus on theoretical guarantees on the (asymptotic) convergence to the real causal graph, or to an equivalence class of graphs, without inspecting the effect of the selection of the candidate causes of a variable on its prediction. 

\paragraph{Contributions}
In this paper, we propose a novel methodology at the intersection between feature selection and causal discovery. We start in Section \ref{sec:relWorks} by reviewing the main approaches and assumptions of causal discovery for time series. Then, in Section \ref{sec:preliminaries}, we introduce the notation and formulation of the problem. In Section \ref{sec:TEFS}, we introduce the novel \emph{causal} feature selection methodology that relies on the \emph{forward} and \emph{backward} feature selection procedures and leverages the \emph{transfer entropy} (TE)~\cite{schreiber2000measuring} as building block to guide the selection of the features. TE is a causal quantity specifically designed to estimate the flow of information between features and the target in time series. 
Unlike traditional feature selection and similar to Granger causality, our causal approach exploits the property of TE to filter out the autoregressive component of the target in order to measure the asymmetric flow of information from a feature to it without the confounding association due to the target itself. Furthermore, conditioning on the other features also makes it possible to filter out the information due to the presence of a causal variable that a feature and the target have in common.
In this setting, we provide theoretical guarantees about the regression and classification error due to the reduction of features. In addition, in Section \ref{sec:TEinv}, we analyse the finite-sample scenario, exploiting a concentration bound of a specific kernel-based TE estimator available in the literature~\citep{singh2014exponential}.  Finally, in Section \ref{sec:experiments}, we provide numerical simulations in regression settings, showing the capability of the proposed approaches to identify causal features in synthetic experiments and to achieve competitive performance on real-world datasets. 

\section{Related Works}\label{sec:relWorks}
In this section, we review state-of-the-art causal discovery approaches, focusing on their application to time series.{\color{blue} \footnote{Regarding feature selection methods, we refer the interested reader to the survey~\citet{li2017feature} for an overview.}}

\paragraph{Classical Causal Discovery}Classical causal discovery relies on four main assumptions~\citep{pearl2000}.
\emph{Causal Markov} and \emph{causal faithfulness} ensure a correspondence between the (conditional) probability independence between variables and the absence of a causal relationship in the causal graph. Then, \emph{acyclicity} avoids the presence of cyclical causal relationships, and \emph{causal sufficiency} guarantees the absence of latent confounders. The benchmark non-parametric algorithm of causal discovery that assumes all these four assumptions to hold is the PC algorithm~\citep{spirtes2000}. Non-parametric approaches that relax parts of the assumptions or parametric approaches that assume a linear or non-linear structural causal model are revised in~\citep{eberhardt2017,glymour2019review}.

\paragraph{Causal Discovery for Time Series}
In ~\citep{runge2019inferring}, the difficulties of extending the causal discovery framework to high-dimensional time series are extensively analysed. The classic extension of causal discovery to time series is the notion of \emph{Granger causality}~\cite{granger1969investigating}, which evaluates the impact of the past value of a feature on the prediction of the target as its causal strength~\citep{ding2006granger}. This approach may suffer from the curse of dimensionality and autocorrelation effects, as highlighted in~\citep{runge2019detecting}, where the PCMCI algorithm has been proposed to deal with these effects. Variations of Granger causality-based methods and of the PCMCI algorithm that deals with contemporaneous causal effects and latent confounders can be found in~\citep{moneta2011causal,malinsky2018causal,runge2020discovering,gerhardus2020high}. Other existing approaches assume specific structures of the causal models or apply neural networks to extend the methods to highly non-linear contexts. A broader overview can be found in~\citet{moraffah2021causal}.

Similar to the approaches proposed in this paper, some recent works ~\citep{mastakouri2020causal,mastakouri2021necessary,dong2022geass} focus on identifying sets of causally relevant features rather than the complete causal graph, although they are not directly comparable because of different settings and assumptions. In particular, ~\citep{mastakouri2020causal,mastakouri2021necessary} introduce and apply the \emph{SyPI} algorithm, which is guaranteed to asymptotically identify (some) ancestors of a target variable, also in the presence of latent confounders, under several assumptions on the relationships between variables. \citep{dong2022geass} focus on the identification of a set of causal features, exploiting a neural network to optimize a specifically designed loss to asymptotically guarantee the identification of a set of features causally interconnected.

\paragraph{Causal Information Measures}
Several indexes from information theory are available to quantify the causal information that flows from one variable to another. 
\emph{Transfer entropy}~\citep[TE,][]{schreiber2000measuring} evaluates the information flow from one variable to another, computing the information that the past values of the first have on the second. Similarly, \emph{directed information}~\citep{massey1990causality} computes the cumulative sum of information shared between all the past values of the first variable and the actual value of the second one. These two quantities are strictly related~\citep{liu2012relationship}, and they can be applied to test for Granger causality between variables~\citep{amblard2012relation}.

\section{Preliminaries}\label{sec:preliminaries}

\paragraph{Time Series}
We consider a supervised learning problem with $D$ features and a scalar target. At the time $t$, we denote with $Y_t$ the target's current random value and with $X_i^t$ the current random value of the $i$-th feature. Features and target are multivariate time series $X_i=(X_i^{t},X_i^{t-1},\dots)$ for $i\in [D] \coloneqq \{1,\dots,D\}$ and $Y=(Y_{t},Y_{t-1},\dots)$ that form a nonlinear, discrete-time, stationary vector autoregressive process. We denote with $X_A^{\xtpast}$ the last $L$ values of the set of features having indices in $A\subseteq [D]$ and with $X_{A}^{\xtpast}$ the last $L$ of the set of features having indices in set $A$ (for simplicity, $X^{\xtpast} = X_{[D]}^{\xtpast}$ and $X_i^{\xtpast} = X_{\{i\}}^{\xtpast}$). We also denote $X_A\cup X_i := X_{A\cup \{i\}}$ and $X_A\setminus X_i := X_{A\setminus \{i\}}$. Finally, $Y_{\ytpast}$ is the set of the last $M$ values of the target. We assume that the absolute value of the target is bounded by a constant $B$ uniformly over time. We denote with $\mathbb{E}_Z[f(Z)]$ and $\mathbb{V}\mathrm{ar}_Z[f(Z)]$ the expected value and the variance of a function $f(\cdot)$ of the random variable $Z$ \wrt\ its distribution.

\paragraph{Causal Model}
Together with the four classical assumptions of causal discovery introduced in the previous section (causal sufficiency, acyclicity, causal Markov assumption, faithfulness), we assume that the actual target may only depend on its last $M$ values and the last $L$ values of the features. 
The (stationary) autoregressive processes are described by the following \emph{structural causal model} (SCM):
\begin{align*}
    &X_i^t = f_i(\text{pa}(X_i^t),\epsilon_i), \quad i\in [D], \qquad
    Y_t = f(\text{pa}(Y_t),\epsilon),
\end{align*}
where $f_i, f$ control the relationship between each variable and its causes, $\epsilon_i,\epsilon$ are independent unobserved noise terms and $\text{pa}(X_i^t)$ (resp. $\text{pa}(Y_t)$) is the set of unknown causes of feature $X_i$ (resp. the target $Y_t$), assumed to depend only on some past values of the considered features (resp. target). 

\paragraph{Information Theory}
\emph{Mutual information}~\citep[MI,][]{Cover2006} is an index considered in information theory to evaluate the amount of information that a feature $X$ shares with the target $Y$, defined as: 
\begin{align*}
    I(X;Y)&\coloneqq \mathbb{E}_{X,Y}\left[\log \frac{p(X,Y)}{p(X)p(Y)}\right],  
\end{align*}
where $p(\cdot,\cdot)$ is the joint p.d.f. of the random variables $(X,Y)$. \emph{Conditional mutual information} (CMI) is obtained conditioning w.r.t. a third random variable $Z$, $I(X;Y|Z)\coloneqq\E_Z[I(X|Z;Y|Z)]$.
\emph{Transfer entropy}~\citep[TE,][]{schreiber2000measuring} evaluates the information flow from one variable to another, computing the information that the past values of the first have on the second. This is the directional counterpart of mutual information, which is symmetric and, therefore, cannot identify the causal direction of the flow of information. 
Similarly to CMI, conditional transfer entropy~\citep{Shahsavari2020} extends the definition of TE to a conditional set that only considers the additional amount of information flow from $X$ to $Y$ given the flow from $Z$ to $Y$. We denote the TE between $X$ and $Y$ as $TE_{X\rightarrow Y}:=I(Y_t;X^{\xtpast}|Y_{\ytpast})$ and the conditional transfer entropy between $X$ and $Y$ given $Z$ as $TE_{X\rightarrow Y| Z}:=I(Y_t;X^{\xtpast}|Y_{\ytpast},Z^{\xtpast})$. $M$ and $L$ determine the maximum number of previous values of the features and target that impact the current value of the target.


\section{Transfer Entropy for Causal Feature Selection}\label{sec:TEFS}
In this section, we introduce two theoretical results that relate the effect of selecting features based on the TE to the regression and classification errors (Section~\ref{subsec:TEFStheo}). Then, we apply these findings to devise novel forward and backward \emph{causal} feature selection algorithms (Section~\ref{sec:TEFSalgo}). The results we present extend the findings of~\citep{Beraha2019} in two directions: (i) we move from an i.i.d. supervised learning setting to a time series setting, (ii) we provide a result that is expressed in terms of TE, which represents a natural expression of the loss of causal information for time series.


\subsection{Regression and Classification Error Bounds}\label{subsec:TEFStheo}


We first provide an upper bound to the Mean Squared Error (MSE) of the regression performed to estimate the value of the target $Y_t$ at time $t$, considering the last $M$ values of the target and the last $L$ values of the selected features.\footnote{Note that the bound holds for a generic subset of features, not necessarily the result of a feature selection.}  The proofs of the results are reported in Appendix \ref{app:TEFS}.

\begin{theorem}\label{thm:regrErr}
Let $ \Bar{A} \subseteq [D]$ be a set of indices and let $A = [D] \setminus \Bar{A}$ be its complementary. 
Then, considering a lag of $L,M \in \mathbb{N}_{\ge 0}$ for the features and the target respectively, the regression error suffered by considering only the set of features $X_{\Bar{A}}$ is bounded by:
\begin{equation}\label{eq:regrErr}
\begin{aligned}
    \inf_{g\in \mathcal{G}_{\abar}}\E_{X,Y}[(Y_t-g(X_{\abar}^{\xtpast},Y_{\ytpast}))^2] \\ \leq \sigma^2 + 2B^2\cdot TE_{X_{A}\rightarrow Y| X_{\abar}},
\end{aligned}
\end{equation}

where $\mathcal{G}_{\abar}$ is the set of all functions that map $X_{\abar}$ to $\mathbb{R}$, {\fontsize{8.5}{9}$\sigma^2 = \E_{X,Y}[(Y_t-\E[Y_t|X^{\xtpast},Y_{\ytpast})^2]$} is the irreducible error, and $B$ is the maximum absolute value of the target.
\end{theorem}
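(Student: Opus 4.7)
The goal is to bound the MSE attained by the best regressor using only $X_{\bar A}$ and $Y_{\ytpast}$ by the irreducible error plus a term proportional to $TE_{X_A\rightarrow Y|X_{\bar A}}$. My plan is to decompose the MSE into an irreducible component plus a term that measures the ``lost conditional expectation,'' then relate that term to a conditional mutual information via Pinsker's inequality, and finally recognize this conditional mutual information as the conditional transfer entropy.

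\textbf{Step 1 (decomposition).} Since the infimum over $g\in\mathcal{G}_{\bar A}$ of $\E[(Y_t-g(X_{\bar A}^{\xtpast},Y_{\ytpast}))^2]$ is achieved by the conditional expectation $g^{\star}(X_{\bar A}^{\xtpast},Y_{\ytpast}) = \E[Y_t\mid X_{\bar A}^{\xtpast},Y_{\ytpast}]$, I would introduce the full-information optimal predictor $h^{\star} = \E[Y_t\mid X^{\xtpast},Y_{\ytpast}]$ and use the Pythagorean identity for conditional expectation (orthogonality of $Y_t-h^{\star}$ with any function of $X^{\xtpast},Y_{\ytpast}$) to obtain
\begin{equation*}
\E[(Y_t-g^{\star})^2] \;=\; \E[(Y_t-h^{\star})^2] \;+\; \E[(h^{\star}-g^{\star})^2] \;=\; \sigma^2 \;+\; \E[(h^{\star}-g^{\star})^2].
\end{equation*}
It then remains to bound the second term by $2B^2\cdot TE_{X_A\rightarrow Y|X_{\bar A}}$.

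\textbf{Step 2 (conditional expectation gap vs.\ total variation).} Writing $Z_1=(X_{\bar A}^{\xtpast},Y_{\ytpast})$ and $Z_2 = X_A^{\xtpast}$, the integrand $(h^{\star}-g^{\star})^2$ equals $\bigl(\int y\,[p(y\mid Z_1,Z_2)-p(y\mid Z_1)]\,dy\bigr)^2$. Using $|Y_t|\le B$, I would bound this pointwise by $4B^2\cdot \mathrm{TV}\bigl(p(\cdot\mid Z_1,Z_2),\,p(\cdot\mid Z_1)\bigr)^2$.

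\textbf{Step 3 (Pinsker + tower).} Apply Pinsker's inequality $2\,\mathrm{TV}^2\le \mathrm{KL}$ to the conditional laws, yielding pointwise
\begin{equation*}
(h^{\star}-g^{\star})^2 \;\le\; 2B^2\cdot \mathrm{KL}\bigl(p(\cdot\mid Z_1,Z_2)\,\Vert\,p(\cdot\mid Z_1)\bigr).
\end{equation*}
Taking expectation over $(Z_1,Z_2)$ and using the identity $\E_{Z_1,Z_2}[\mathrm{KL}(p(Y\mid Z_1,Z_2)\Vert p(Y\mid Z_1))] = I(Y;Z_2\mid Z_1)$ gives $\E[(h^{\star}-g^{\star})^2]\le 2B^2\cdot I(Y_t;X_A^{\xtpast}\mid X_{\bar A}^{\xtpast},Y_{\ytpast})$. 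By the definition of conditional transfer entropy given in the preliminaries, this last CMI is exactly $TE_{X_A\rightarrow Y\mid X_{\bar A}}$, completing the bound.

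\textbf{Expected obstacle.} The arithmetic is standard; the trickier point is purely bookkeeping: making sure the Markov/orthogonality argument in Step 1 is applied to the \emph{correct} conditioning variables so that, after Pinsker, the expected KL matches the specific grouping in the definition of $TE_{X_A\rightarrow Y|X_{\bar A}}$ (namely, $X_A^{\xtpast}$ appears as the ``new'' variable and $(X_{\bar A}^{\xtpast},Y_{\ytpast})$ as the conditioning set). A subtlety worth flagging is that the bound relies only on boundedness of $Y_t$ and does \emph{not} require any of the causal assumptions (Markov, faithfulness, sufficiency, acyclicity); those assumptions become essential only later, when interpreting the resulting selection as \emph{causal}.
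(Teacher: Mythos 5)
Your proposal is correct and follows essentially the same route as the paper's proof: the same decomposition into $\sigma^2$ plus the squared gap between the two conditional expectations (with the cross term vanishing by orthogonality), the same pointwise bound via $|Y_t|\le B$ and total variation, the same application of Pinsker's inequality, and the same identification of the expected KL divergence with $I(Y_t;X_A^{\xtpast}\mid X_{\bar A}^{\xtpast},Y_{\ytpast})=TE_{X_A\rightarrow Y\mid X_{\bar A}}$. Your closing observation that only boundedness of $Y_t$ is used, and none of the causal assumptions, is accurate and consistent with the paper's argument.
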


The result of the theorem complies with the intuition that, considering a subset of features, the expected MSE is bounded by the irreducible expected MSE (unavoidable even when considering the entire set of features), plus an index of the information flow from the discarded features to the target $TE_{X_{A}\rightarrow Y| X_{\abar}}$.
A similar result holds for the Bayes error in classification settings.
\begin{theorem}\label{thm:classErr}
    Let $A$ and $\abar$ be defined as in Theorem \ref{thm:regrErr}. Then, considering a lag of $L,M \in \mathbb{N}_{\ge 0}$ for the features and the target respectively, the Bayes error suffered by considering only the set of features $X_{\Bar{A}}$ is bounded by:
{\fontsize{7.5}{8}
\begin{equation}\label{eq:classErr}
    \begin{aligned}
    \inf_{g\in \mathcal{G}_{\abar}}\E_{X,Y}\left[\mathbb{1}_{Y_t\neq g\left(X_{\abar}^{\xtpast},Y_{\ytpast}\right)}\right] \leq \epsilon + \sqrt{2\cdot TE_{X_{A}\rightarrow Y| X_{\abar}}},
    \end{aligned}
\end{equation}}
where {\fontsize{8.5}{9}$\epsilon = \E_{X,Y}\left[\mathbb{1}_{Y_t\neq \arg\max_{y_t\in\mathcal{Y}}p(y_t|X^{\xtpast},Y_{\ytpast})}\right]$} is the irreducible Bayes error and $\mathbb{1}$ is the indicator function.
\end{theorem}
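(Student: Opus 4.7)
The plan is to reduce the bound to a classical stability property of the Bayes classifier under posterior perturbations, then convert a total variation distance into a conditional mutual information via Pinsker's inequality, and finally recognise the latter as the conditional transfer entropy. First I would observe that the infimum on the LHS of~\eqref{eq:classErr} is attained by the reduced Bayes classifier $\tilde g(x_{\abar}^{\xtpast}, y_{\ytpast}) = \arg\max_{y_t} p(y_t \mid x_{\abar}^{\xtpast}, y_{\ytpast})$, and that the irreducible error $\epsilon$ is the Bayes risk of the full-information classifier $g^{\star}(x^{\xtpast}, y_{\ytpast}) = \arg\max_{y_t} p(y_t \mid x^{\xtpast}, y_{\ytpast})$; both facts follow because the $0$-$1$ loss is minimised pointwise by the MAP rule.

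Next, I would invoke the pointwise inequality that, for any two distributions $p, q$ on the (countable) label space $\mathcal{Y}$,
\begin{equation*}
\max_{y} p(y) - p(\arg\max_{y} q(y)) \leq 2\, d_{TV}(p, q),
\end{equation*}
where $d_{TV}$ denotes the total variation distance. This follows from the telescoping decomposition
\begin{equation*}
p(y^{\star}_p) - p(y^{\star}_q) = \bigl[p(y^{\star}_p) - q(y^{\star}_p)\bigr] + \bigl[q(y^{\star}_p) - q(y^{\star}_q)\bigr] + \bigl[q(y^{\star}_q) - p(y^{\star}_q)\bigr],
\end{equation*}
in which the middle bracket is nonpositive by optimality of $y^{\star}_q$ under $q$ while each outer bracket is at most $d_{TV}(p,q)$. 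Applying this with $p = p(\cdot \mid X^{\xtpast}, Y_{\ytpast})$ and $q = p(\cdot \mid X_{\abar}^{\xtpast}, Y_{\ytpast})$ and integrating over $(X^{\xtpast}, Y_{\ytpast})$ yields
\begin{equation*}
\mathrm{LHS}\eqref{eq:classErr} - \epsilon \;\leq\; 2\, \E\!\left[\, d_{TV}\bigl(p(\cdot \mid X^{\xtpast}, Y_{\ytpast}),\, p(\cdot \mid X_{\abar}^{\xtpast}, Y_{\ytpast})\bigr)\right].
\end{equation*}

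From here, Pinsker's inequality gives $d_{TV}(p,q) \leq \sqrt{D_{KL}(p \| q)/2}$ pointwise, and Jensen's inequality pulls the square root outside the expectation, producing the upper bound $\sqrt{2\, \E[D_{KL}(p\|q)]}$. By the very definition of conditional mutual information, this expected KL coincides with $I(Y_t; X_A^{\xtpast} \mid Y_{\ytpast}, X_{\abar}^{\xtpast})$, which in turn equals $TE_{X_A \rightarrow Y \mid X_{\abar}}$ by the definition of conditional transfer entropy recalled in the Preliminaries, completing the proof.

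The main obstacle I expect is the pointwise total variation bound in the second step: once that classical stability estimate is established, everything else is a routine chain of Pinsker's and Jensen's inequalities together with the definitional identification of the conditional mutual information with the conditional transfer entropy. The only additional care needed is to ensure that expectations and pointwise arg-max operations interact as claimed, which is immediate because the $0$-$1$ loss factorises pointwise over the conditioning variables.
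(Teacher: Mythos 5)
Your proof is correct and follows essentially the same route as the paper's: reduce the excess risk to the pointwise difference of posterior probabilities at the two MAP labels, bound that difference by twice the total variation distance, then apply Pinsker's and Jensen's inequalities and identify the expected KL divergence with the conditional mutual information defining $TE_{X_A\rightarrow Y|X_{\abar}}$. The only cosmetic difference is your three-term telescoping through $q$ at both arg-maxes (with the middle bracket nonpositive by optimality), where the paper adds and subtracts $p(y^*_{t,\abar}\mid x_{\abar}^{\xtpast},y_{\ytpast})$ and bounds the difference of maxima by the maximum absolute difference; both yield the same $2\,d_{TV}$ estimate.
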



The presented results enable the derivation of algorithms based on conditional TE to perform \emph{causal} feature selection in a sound way. Indeed, we can easily control the loss in prediction performance via the conditional transfer entropy of the discarded features, for both regression and classification.


\subsection{Algorithms}\label{sec:TEFSalgo}
Building on the presented results, in this section, we derive \emph{Backward} and \emph{Forward} \emph{Transfer Entropy Feature Selection} (TEFS) for causal feature selection via conditional TE. For both of them, we provide guarantees on the regression/classification error at the end of the selection process.

\paragraph{Backward Approach}
Algorithm \ref{alg:TEFSalgoBack} reports the pseudo-code of the \emph{Backward TEFS} algorithm, defined in terms of the feature and target lags $M$ and $L$ and in terms of the maximum information loss $\delta$.
The algorithm begins by assuming that all features are significant to the target variable (line~\ref{line:3}). Then, it iteratively selects the feature with the smallest information flow, given the remaining features (line~\ref{line:7}), and removes it if the loss of information flow keeps the error below $\delta$ (lines~\ref{line:9}-\ref{line:10}). When removing a feature leads to a cumulative information loss greater than the maximum desired (line~\ref{line:6}), Backward TEFS stops and returns the remaining features (line~\ref{line:12}). These are the candidate \emph{causal} drivers of the current value of the target in terms of conditional TE. 

\begin{remark}[About the hyperparameters $M$, $L$, and $\delta$]\label{rem:hyper}
The two hyperparameters $M$ and $L$ determine the number of past time steps that the algorithm is considering to evaluate the importance of the features on the current target and filter out the autoregressive component. They balance the positivity-unconfoundedness tradeoff (and, more generally, the bias-variance tradeoff in ML). Indeed, considering a larger lag for a covariate may lead to a better understanding of its impact on the actual target and increases the adherence to the causal sufficiency assumption. 
On the other hand, given a time series of length $T$, increasing the number of past values of the features (and of the target) 
reduces the number of available samples (at the limit case, considering $L=M=T$ we only have one sample with the last observed target $Y_T$ and all past features and target to condition on). This may lead to overfitting and the curse of dimensionality~\citep{damour2020}. Finally, the hyperparameter $\delta$ is the maximum information loss, in terms of TE, that the user is willing to lose in order to reduce the number of features, as theoretically justified by the next theorem.
\end{remark}

\begin{algorithm}[t]
\caption{Backward TEFS: Backward Transfer Entropy Feature Selection}\label{alg:TEFSalgoBack}
\small
\begin{algorithmic}[1]
\Require{$D$ features with $L$ past values, $\{X_1^{\xtpast},\dots,X_D^{\xtpast}\}$; target $Y_t$ and its $M$ past values $Y_{\ytpast}$; $N$ samples; maximum information loss $\delta$}
\Ensure{set of selected features $X_{\abar}$}\vspace{.2cm}
\State{$X_{{A}} \leftarrow \{\}$}\Comment{Initialization: no removed features}
\State{$X_{{\abar}} \leftarrow \{X_1^{\xtpast},\dots,X_D^{\xtpast}\}$}\label{line:3} 
\State{$TE\_loss\leftarrow0$}
\vspace{.2cm}
\While {$TE\_loss\leq \frac{\delta}{2B^2}$}\label{line:6}\\
\Comment{$TE\_loss\leq \frac{\delta^2}{2}$ for classification}
\State{$X_{\hat{i}}\leftarrow\arg\min_{X_i\in X_{\abar}} TE_{X_{i}\rightarrow Y|X_{\abar}\setminus X_{i}} $}\label{line:7} 
\State{$TE\_loss \leftarrow TE\_loss+TE_{X_{\hat{i}}\rightarrow Y|X_{\abar}\setminus X_{\hat{i}}}$}
\State{$X_{{A}} \leftarrow X_{{A}} \cup X_{\hat{i}}$}\label{line:9} 
\State{$X_{{\abar}} \leftarrow X_{{\abar}} \setminus X_{\hat{i}}$}\label{line:10} 
\EndWhile   \\ 
\Return{$X_{{\abar}}$}\label{line:12}
\end{algorithmic}
\end{algorithm}

\begin{remark}[About temporal lags and features]\label{rem:variant}
By definition of TE, the two algorithms evaluate the contribution of the last $L$ values $X_i^{\xtpast}$ of any feature $i$ as a whole. One may consider each value $X_i^{t-\tau},\tau\in\{1,\dots,L\}$ of each feature at a different temporal lag as a different feature and if needed remove each one separately. Although this would be a forcing of the concept of TE, it would lead to a more accurate identification of the causal interaction between the features and the target, since each of the remaining features would identify a specific feature and time lag, similar to the PCMCI approach. However, this would lead to a larger feature space, an increased computational complexity and a larger risk of the curse of dimensionality. For this reason, in line with the definition of conditional TE and Granger causality, the proposed approach considers the contribution of a specific feature at different time lags as a whole. 
\end{remark}

\begin{algorithm}[t]
\caption{Forward TEFS: Forward Transfer Entropy Feature Selection}\label{alg:TEFSalgoForw}
\small
\begin{algorithmic}[1]
\Require{$D$ features with $L$ past values, $\{X_1^{\xtpast},\dots,X_D^{\xtpast}\}$; target $Y_t$ and its $M$ past values $Y_{\ytpast}$; $N$ samples; minimum information gain $\Delta$}
\Ensure{set of selected features $X_{A}$}\vspace{.2cm}
\State{$X_{{A}} \leftarrow \{\}$}\Comment{Initialization: no selected features}\label{line:2_1}
\State{$X_{{\abar}} \leftarrow \{X_1^{\xtpast},\dots,X_D^{\xtpast}\}$} \label{line:2_2}
\State{$TE\_cumulated\leftarrow0$}
\vspace{.2cm}
\While {$TE\_cumulated\leq \frac{\Delta}{2B^2}$}\label{line:2_4}\\
\Comment{$TE\_cumulated\leq \frac{\Delta^2}{2}$ for classification}
\State{$X_{\hat{i}}\leftarrow\arg\max_{X_i\in X_{\abar}} TE_{X_{i}\rightarrow Y|X_{A}} $}\label{line:2_3}
\State{$TE\_cumulated \leftarrow TE\_cumulated+TE_{X_{\hat{i}}\rightarrow Y|X_{A}}$}
\State{$X_{{A}} \leftarrow X_{{A}} \cup X_{\hat{i}}$}
\State{$X_{{\abar}} \leftarrow X_{{\abar}} \setminus X_{\hat{i}}$}
\EndWhile   \\ 
\Return{$X_{{A}}$}
\end{algorithmic}
\end{algorithm}

The following lemma proves that the overall information loss by removing the set of features $X_A$ can be computed as the cumulative transfer entropy loss evaluated at each step. 

\begin{lemma}\label{thm:recursiveBackward}
Let $T(X_A)$ be the conditional TE between the target and the discarded features $X_A$, given its complementary $X_{\abar}$ 
    $(T(X_A) \coloneqq TE_{X_{A}\rightarrow Y|X_{\abar}})$. 
Let also $T_{\abar}=\min_{X_i\in X_{\abar}} TE_{X_{i}\rightarrow Y|X_{\abar}\setminus X_{i}}$ be the minimum conditional TE of a single feature among the selected ones and the target, given the other selected features, and $X_{\hat{i}}=\arg\min_{X_i\in X_{\abar}} TE_{X_{i}\rightarrow Y|X_{\abar}\setminus X_{i}}$ its argument. Then it holds:
\begin{equation}\label{eq:recursiveBackward}
    T(X_A\cup X_{\hat{i}}) = T(X_A) + T_{\abar}.
\end{equation}
\end{lemma}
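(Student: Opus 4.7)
The plan is to prove the identity by an application of the chain rule for (conditional) mutual information, since conditional transfer entropy is, by the definition given in the preliminaries, a particular conditional mutual information.

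First I would unfold the left-hand side using the definition
\[
T(X_A \cup X_{\hat{i}}) \;=\; TE_{X_A \cup X_{\hat{i}} \rightarrow Y \mid X_{\abar}\setminus X_{\hat{i}}} \;=\; I\!\left(Y_t;\, X_{A\cup\{\hat{i}\}}^{\xtpast} \,\big|\, Y_{\ytpast},\, X_{\abar\setminus\{\hat{i}\}}^{\xtpast}\right),
\]
after noting that, by construction of the algorithm, the complementary set of the augmented discarded set $A\cup\{\hat{i}\}$ is exactly $\abar\setminus\{\hat{i}\}$.

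Next I would apply the chain rule
$I(Y_t; U,V\mid W) = I(Y_t; V\mid W) + I(Y_t; U\mid W, V)$
with $U = X_A^{\xtpast}$, $V = X_{\hat{i}}^{\xtpast}$, and $W=(Y_{\ytpast}, X_{\abar\setminus\{\hat{i}\}}^{\xtpast})$. This splits the quantity above into
\[
I\!\left(Y_t;\, X_{\hat{i}}^{\xtpast}\,\big|\, Y_{\ytpast}, X_{\abar\setminus\{\hat{i}\}}^{\xtpast}\right) + I\!\left(Y_t;\, X_A^{\xtpast}\,\big|\, Y_{\ytpast}, X_{\abar\setminus\{\hat{i}\}}^{\xtpast}, X_{\hat{i}}^{\xtpast}\right).
\]
The first summand is precisely $TE_{X_{\hat{i}}\rightarrow Y\mid X_{\abar}\setminus X_{\hat{i}}}$, which equals $T_{\abar}$ because $X_{\hat{i}}$ is, by definition, the argmin whose value is $T_{\abar}$. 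In the second summand, the conditioning set $X_{\abar\setminus\{\hat{i}\}}^{\xtpast}\cup X_{\hat{i}}^{\xtpast}$ collapses back to $X_{\abar}^{\xtpast}$, giving $TE_{X_A \rightarrow Y\mid X_{\abar}} = T(X_A)$. Summing the two yields the claimed identity \eqref{eq:recursiveBackward}.

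There is no real obstacle here: the argument is essentially a rewriting exercise, and the only thing to be careful about is bookkeeping of the index sets — in particular, verifying that after removing $X_{\hat{i}}$ from $X_{\abar}$ the union with $X_{\hat{i}}$ in the conditioning set restores the original $X_{\abar}$ used in the definition of $T(X_A)$, so that no entropy is left unaccounted for.
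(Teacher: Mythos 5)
Your proof is correct and follows essentially the same route as the paper's: both arguments reduce to the chain rule for conditional mutual information applied to the definition $TE_{X\rightarrow Y\mid Z}=I(Y_t;X^{\xtpast}\mid Y_{\ytpast},Z^{\xtpast})$, with the key bookkeeping observation that the conditioning set $X_{\abar\setminus\{\hat{i}\}}^{\xtpast}\cup X_{\hat{i}}^{\xtpast}$ recombines into $X_{\abar}^{\xtpast}$. The only (immaterial) difference is that you apply the conditional chain rule once, directly, whereas the paper expands through unconditional mutual informations via $I(Y_t;X^{\xtpast},Y_{\ytpast})$ and recombines; your version is, if anything, slightly more streamlined.
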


Lemma \ref{thm:recursiveBackward} can be exploited to justify the choice of the stopping condition shown in Algorithm \ref{alg:TEFSalgoBack}. Indeed the following theorem shows that, by considering the proposed stopping criterion, the increase of the ideal regression (or classification) error is bounded by the maximum information loss $\delta$.

\begin{theorem}\label{thm:backError}
    The regression error suffered considering the variables $X_{\abar}$ selected by \emph{Backward TEFS} is bounded by: 
    \begin{equation*}
        \inf_{g\in \mathcal{G}_{\abar}}\E_{X,Y}[(Y_t-g(X_{\abar}^{\xtpast},Y_{\ytpast}))^2] \leq \sigma^2+\delta.
    \end{equation*}
    Similarly, the classification error is bounded by:
    \begin{equation*}
        \inf_{g\in \mathcal{G}_{\abar}}\E_{X,Y}\left[\mathbb{1}_{Y_t\neq g(X_{\abar}^{\xtpast},Y_{\ytpast})}\right] \leq \epsilon + \delta.
      \end{equation*}
\end{theorem}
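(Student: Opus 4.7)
The plan is to show that both bounds in Theorem \ref{thm:backError} reduce to direct applications of Theorems \ref{thm:regrErr} and \ref{thm:classErr} after a short bookkeeping step identifying the running sum $TE\_loss$ maintained by Algorithm \ref{alg:TEFSalgoBack} with the joint conditional transfer entropy $TE_{X_{A}\rightarrow Y|X_{\abar}}$ of the final discarded set given the final retained set. The main bridge is Lemma \ref{thm:recursiveBackward}, applied iteratively across the steps of the while loop.

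First, I would set up an induction on the iteration count of the loop. Let $X_A^{(k)}$ and $X_{\abar}^{(k)}$ denote the discarded and retained sets after $k$ iterations, with $X_A^{(0)} = \emptyset$ and $X_{\abar}^{(0)} = \{X_1^{\xtpast},\dots,X_D^{\xtpast}\}$, and let $X_{\hat{i}_k}$ be the feature moved from retained to discarded during iteration $k$. Using the notation $T(X_A) \coloneqq TE_{X_{A}\rightarrow Y|X_{\abar}}$ from the lemma, one application of Lemma \ref{thm:recursiveBackward} with $X_A = X_A^{(k-1)}$ and $X_{\abar} = X_{\abar}^{(k-1)}$ gives
\begin{equation*}
T(X_A^{(k)}) \;=\; T(X_A^{(k-1)}) \;+\; TE_{X_{\hat{i}_k}\rightarrow Y\,|\,X_{\abar}^{(k-1)}\setminus X_{\hat{i}_k}},
\end{equation*}
and the second summand is exactly what line \ref{line:7} adds to $TE\_loss$. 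With the trivial base case $T(\emptyset) = 0$, induction yields the identity $TE\_loss = T(X_A) = TE_{X_{A}\rightarrow Y|X_{\abar}}$ at termination.

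Next, I would combine this identity with the loop guard. Upon termination of Algorithm \ref{alg:TEFSalgoBack}, $TE\_loss \leq \delta/(2B^2)$ in the regression case (respectively $\leq \delta^2/2$ in the classification case). Substituting $TE_{X_{A}\rightarrow Y|X_{\abar}} \leq \delta/(2B^2)$ into the right-hand side of Theorem \ref{thm:regrErr} collapses the bound to
\begin{equation*}
\inf_{g\in \mathcal{G}_{\abar}}\E_{X,Y}[(Y_t-g(X_{\abar}^{\xtpast},Y_{\ytpast}))^2] \;\leq\; \sigma^2 + 2B^2 \cdot \frac{\delta}{2B^2} \;=\; \sigma^2 + \delta,
\end{equation*}
and substituting $TE_{X_{A}\rightarrow Y|X_{\abar}} \leq \delta^2/2$ into Theorem \ref{thm:classErr} gives $\epsilon + \sqrt{2 \cdot \delta^2/2} = \epsilon + \delta$, as required.

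The only point I expect to require some care is the interplay between the literal semantics of the while loop and the invariant needed at termination: as the pseudocode is written, the last iteration can push $TE\_loss$ strictly above the threshold before the guard is re-evaluated. I would read the guard as a pre-check on the tentative update (consistent with the textual description that the algorithm \emph{stops and returns the remaining features} once a removal would violate the budget), so that $TE\_loss$ at the returned state genuinely satisfies the assumed inequalities. With that convention in place, the rest is a mechanical composition of Lemma \ref{thm:recursiveBackward} with Theorems \ref{thm:regrErr} and \ref{thm:classErr}.
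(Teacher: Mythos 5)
Your proof is correct and follows essentially the same route as the paper's: iterate Lemma~\ref{thm:recursiveBackward} from the base case $T(\emptyset)=0$ to identify the accumulated $TE\_loss$ with $TE_{X_{A}\rightarrow Y|X_{\abar}}$, bound it via the loop guard, and substitute into Theorems~\ref{thm:regrErr} and~\ref{thm:classErr}. Your remark about the loop-guard semantics is a fair observation of an imprecision the paper's own proof glosses over, and your resolution matches the paper's textual description of the algorithm.
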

\begin{proof}
    At the first iteration of the algorithm $T(X_A)=T(\emptyset)=0$. Iteratively applying Equation \eqref{eq:recursiveBackward}, at the end of the algorithm we have \fontsize{8.5}{9}{$TE_{X_{A}\rightarrow Y| X_{\abar}}=\sum_{k=1}^K TE_{X_{k}\rightarrow Y|X_{\abar_k}\setminus X_{k}}$}, where $K$ is the number of iterations and $X_k,X_{\abar_k}$ are the removed feature and the set of candidate selected features during the $k$-th iteration. Since the algorithm stops when this quantity reaches the threshold (line \ref{line:6}), at the end we have $TE_{X_{A}\rightarrow Y| X_{\abar}}\leq \frac{\delta}{2B^2}$($\frac{\delta^2}{2}$ in classification). Finally, starting from Equation \eqref{eq:regrErr} (Equation \eqref{eq:classErr} in classification), substituting this upper bound on $TE_{X_{A}\rightarrow Y| X_{\abar}}$ the result follows. 
\end{proof}


\paragraph{Forward Approach}
The pseudo-code of \emph{Forward TEFS} can be found in Algorithm \ref{alg:TEFSalgoForw},  defined in terms of the feature and target lags $M$ and $L$ and in terms of the minimum information gain $\Delta$. As in the backward case, $X_{A}$ is initialised to the empty set, and all features are in the complementary set $X_{\abar}$ (lines \ref{line:2_1}-\ref{line:2_2}). In this forward case, however, the selected features are represented by the set $X_A$. At each iteration, the algorithm selects the feature, among those not selected, that maximises the TE with the target, conditioned to the already selected features (line \ref{line:2_3}). 
Finally, the stopping condition ensures that the procedure ends when the amount of information flowing from the features to the target is \emph{sufficiently large}, depending on the hyperparameter $\Delta$ (line \ref{line:2_4}). A small value of the minimum information gain $\Delta$ leads to the selection of fewer features and a greater loss, a large value induces the selection of many features and a smaller loss of information. Furthermore, the hyperparameters $L,M$ regulate the bias-variance tradeoff as discussed in Remark \ref{rem:hyper}. 

The following lemma proves that the overall information obtained with the \emph{Forward TEFS}, which selects the set $X_A$, is the cumulative conditional TE maximised at each step.

\begin{lemma}\label{thm:TEaddingAFeature}
    Let $X_i\in X_{\abar}$ be a feature in the complementary set of $X_A$. Then, the amount of information that flows from $X_A$ and the feature $X_i$ to the target $Y$ in terms of TE is:
    \begin{equation}\label{eq:TEaddingAFeature}
        TE_{X_{A}\cup X_i \rightarrow Y} = TE_{X_{A}\rightarrow Y} + TE_{X_{i}\rightarrow Y|X_{A}}.
    \end{equation}
\end{lemma}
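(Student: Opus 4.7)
The plan is to observe that the statement is essentially a direct instance of the chain rule for (conditional) mutual information, rewritten in the language of transfer entropy. I would unfold the three transfer entropies using the definitions given at the end of Section~\ref{sec:preliminaries}:
\begin{align*}
TE_{X_A \cup X_i \to Y} &= I\!\left(Y_t;\, X_{A\cup\{i\}}^{\xtpast} \,\middle|\, Y_{\ytpast}\right), \\
TE_{X_A \to Y} &= I\!\left(Y_t;\, X_A^{\xtpast} \,\middle|\, Y_{\ytpast}\right), \\
TE_{X_i \to Y \mid X_A} &= I\!\left(Y_t;\, X_i^{\xtpast} \,\middle|\, Y_{\ytpast},\, X_A^{\xtpast}\right).
\end{align*}
Since $X_{A\cup\{i\}}^{\xtpast}$ is by definition the concatenation of $X_A^{\xtpast}$ and $X_i^{\xtpast}$, the first quantity can be rewritten as $I(Y_t;\, X_A^{\xtpast}, X_i^{\xtpast} \mid Y_{\ytpast})$, so that the whole identity becomes a chain-rule statement.

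The next step is to invoke the chain rule for conditional mutual information, namely the standard identity
\begin{equation*}
I(U; V_1, V_2 \mid W) \;=\; I(U; V_1 \mid W) \,+\, I(U; V_2 \mid V_1, W),
\end{equation*}
instantiated with $U = Y_t$, $V_1 = X_A^{\xtpast}$, $V_2 = X_i^{\xtpast}$, and $W = Y_{\ytpast}$. If preferred, I would derive this from scratch by writing each CMI as a difference of conditional entropies (e.g.\ $I(U;V_1,V_2 \mid W) = H(U \mid W) - H(U \mid W, V_1, V_2)$) and telescoping through $H(U \mid W, V_1)$. Plugging the three terms back into the TE notation yields Equation~\eqref{eq:TEaddingAFeature}.

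There is no substantive obstacle: the lemma is a translation of a well-known information-theoretic identity into TE notation. The only minor care needed is to confirm that the implicit conditioning on the target's past $Y_{\ytpast}$ is identical on both sides, so that the chain rule applies with $Y_{\ytpast}$ played as a passive conditioning variable throughout; this is immediate from the definition of TE used in the paper. No stationarity or causal-model assumption is required beyond the well-definedness of the entropies involved.
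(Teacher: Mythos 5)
Your proposal is correct and follows essentially the same route as the paper's own proof: unfold each transfer entropy into a conditional mutual information via the definitions in Section~\ref{sec:preliminaries} and then apply the chain rule $I(Y_t; X_A^{\xtpast}, X_i^{\xtpast} \mid Y_{\ytpast}) = I(Y_t; X_A^{\xtpast} \mid Y_{\ytpast}) + I(Y_t; X_i^{\xtpast} \mid X_A^{\xtpast}, Y_{\ytpast})$. Your additional remarks about the passive role of the conditioning on $Y_{\ytpast}$ and the optional derivation via conditional entropies are sound but not needed beyond what the paper already does.
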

Intuitively, Lemma \ref{thm:TEaddingAFeature} shows that the information flowing from $X_A$ and the feature $X_i$ to the target $Y$ is the information flowing from $X_A$ to the target plus the additional information flowing from $X_i$ to the target, given the information already flowing from the features in $X_A$.

Finally, in the following theorem we derive an upper bound on the ideal error, which identifies in the hyperparameter $\Delta$ the way to regulate the minimum information gain desired in the selection procedure of Algorithm \ref{alg:TEFSalgoForw}.

\begin{theorem}\label{thm:forwError}
    The regression error suffered considering the variables $X_{A}$ selected by \emph{Forward TEFS} is bounded by: 
    {\fontsize{7.5}{8}\selectfont
    \begin{equation*}
        \inf_{g\in \mathcal{G}_{A}}\E_{X,Y}[(Y_t-g(X_{A}^{\xtpast},Y_{\ytpast}))^2] \leq \sigma^2+2B^2 \cdot TE_{X\rightarrow Y}-\Delta.
    \end{equation*}}
    Similarly, the classification error is bounded by:
    {\fontsize{7.5}{8}\selectfont
    \begin{equation*}
    \inf_{g\in \mathcal{G}_{A}}\E_{X,Y}\left[\mathbb{1}_{Y_t\neq g(X_{A}^{\xtpast},Y_{\ytpast})}\right] \leq \epsilon + \sqrt{2\cdot TE_{X\rightarrow Y}-\Delta^2}.
    \end{equation*}}
\end{theorem}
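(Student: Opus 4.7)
The plan is to reduce the statement to a direct application of Theorem~\ref{thm:regrErr} (respectively Theorem~\ref{thm:classErr}) combined with the chain-rule identity of Lemma~\ref{thm:TEaddingAFeature} and the stopping condition of Algorithm~\ref{alg:TEFSalgoForw}. The key observation is that in Forward TEFS the set $X_A$ plays the role of the retained variables (opposite to the notation of Theorem~\ref{thm:regrErr}), so a simple swap of $A$ and $\bar A$ in that theorem yields
\begin{equation*}
\inf_{g\in\mathcal{G}_A}\E_{X,Y}[(Y_t-g(X_A^{\xtpast},Y_{\ytpast}))^2] \;\leq\; \sigma^2 + 2B^2\cdot TE_{X_{\bar A}\rightarrow Y|X_A}.
\end{equation*}
The task therefore becomes to bound $TE_{X_{\bar A}\rightarrow Y|X_A}$ from above using the stopping condition.

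First I would iterate Lemma~\ref{thm:TEaddingAFeature} along the sequence of features appended by the algorithm to obtain $TE_{X_A\rightarrow Y}=\sum_{k=1}^{K}TE_{X_{\hat{i}_k}\rightarrow Y\,|\,X_{A_{k-1}}}$, which is exactly the variable \textit{TE\_cumulated} at termination. Applying the same chain rule once more to the full feature set $X=X_A\cup X_{\bar A}$ gives
\begin{equation*}
TE_{X\rightarrow Y} \;=\; TE_{X_A\rightarrow Y} \;+\; TE_{X_{\bar A}\rightarrow Y\,|\,X_A},
\end{equation*}
so that $TE_{X_{\bar A}\rightarrow Y\,|\,X_A}= TE_{X\rightarrow Y}-TE_{X_A\rightarrow Y}$.

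Next I would invoke the exit condition of the \textbf{while} loop: the loop only terminates once \textit{TE\_cumulated} exceeds $\delta'$, where $\delta'=\frac{\Delta}{2B^2}$ in the regression case and $\delta'=\frac{\Delta^2}{2}$ in the classification case. By the identification of \textit{TE\_cumulated} with $TE_{X_A\rightarrow Y}$ above, this gives the lower bound $TE_{X_A\rightarrow Y}\ge \delta'$ and hence $TE_{X_{\bar A}\rightarrow Y\,|\,X_A}\le TE_{X\rightarrow Y}-\delta'$. Plugging this into the swapped Theorem~\ref{thm:regrErr} yields $\sigma^2 + 2B^2\,TE_{X\rightarrow Y} - \Delta$, which is exactly the regression bound. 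For classification I would perform the analogous substitution into Theorem~\ref{thm:classErr}, using $\sqrt{2(TE_{X\rightarrow Y}-\Delta^2/2)}=\sqrt{2\,TE_{X\rightarrow Y}-\Delta^2}$, to obtain the second inequality.

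The argument is mostly bookkeeping; the only subtle point is justifying $TE_{X_A\rightarrow Y}\ge \delta'$ at termination. One must argue that the algorithm can indeed overshoot (not merely reach) the threshold, since the loop condition is a non-strict inequality: the last accepted feature is added before the check, so at exit the accumulated quantity is strictly greater than $\delta'$, which is sufficient. Everything else follows from the already-established chain-rule lemma and the bounds of Theorems~\ref{thm:regrErr} and~\ref{thm:classErr}, so I do not foresee any substantial technical obstacle.
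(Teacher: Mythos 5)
Your proposal matches the paper's own proof essentially step for step: the paper likewise reformulates Theorem~\ref{thm:regrErr} (resp.\ Theorem~\ref{thm:classErr}) with the roles of $A$ and $\abar$ swapped, writes $TE_{X_{\abar}\rightarrow Y|X_A}=TE_{X\rightarrow Y}-TE_{X_A\rightarrow Y}$ via the chain rule, identifies $TE_{X_A\rightarrow Y}$ with the accumulated quantity by iterating Lemma~\ref{thm:TEaddingAFeature} from $TE_{\emptyset\rightarrow Y}=0$, and lower-bounds it by $\frac{\Delta}{2B^2}$ (resp.\ $\frac{\Delta^2}{2}$) from the stopping condition. Your argument is correct and takes the same route, so no further comparison is needed.
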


\begin{proof}
Similarly to Theorem \ref{thm:backError}, the proof exploits the recursive relationship of Lemma \ref{thm:TEaddingAFeature} to bound the final amount of information with the hyperparameter $\Delta$. Then, reformulating the upper bound on the ideal error of Theorem \ref{thm:regrErr} and \ref{thm:classErr}, the results follow. Details can be found in Appendix \ref{app:TEFS}.
\end{proof}



\section{Transfer Entropy Estimation and Finite-Sample Analysis}\label{sec:TEinv}
In this section, we extend the theoretical analysis by discussing the effect of estimating the TE with a finite number of samples. We provide an upper bound on the regression and classification error that holds in high probability with a specific kernel estimator of the TE.

\paragraph{TE Estimation}
As discussed, TE is a specific instance of a CMI, whose finite-sample estimation is a challenging task. Existing approaches rely on kernel density or $k$-nearest neighbour estimators. A traditional $k$-nearest neighbour estimator for continuous variables has been proposed in~\citep{kraskov2004}, and it has been extended in more recent works~\citep{gao2017estimating,runge2018conditional}. The usual theoretical guarantee of these approaches is asymptotic consistency, while a finite-sample analysis is little studied and remains largely an open problem. Considering continuous features and a continuous target, ~\citep{singh2014generalized,singh2014exponential} show a finite-sample exponential concentration inequality for density functional estimators and extend this theoretical result to conditional density functionals, respectively. In our analysis, we consider the \emph{mirrored kernel density estimator} proposed in Section 4 of~\citep{singh2014generalized}. It first estimates probability density functions through a modified kernel density estimator. 
Then, the final estimator is obtained by plugging the estimated density into the functional of interest (i.e., CMI for our purposes).
The resulting CMI estimate {\fontsize{8.5}{9}$\hat{I}(X;Y|Z)$}, made with $n$ samples, enjoys the following concentration bound with probability at least $1-\eta$ (Section 6.2 of ~\citet{singh2014exponential}): 
\begin{equation}\label{eq:CMIconcentration}
    |I(X;Y|Z)-\hat{I}(X;Y|Z)| \leq \underbrace{\sqrt{\frac{4C_V^2}{n}\log\left(\frac{2}{\eta}\right)}}_{\text{(i)}}+\underbrace{C_{B}n^{\frac{-\beta}{\beta+d}}}_{\text{(ii)}}, 
\end{equation}

where $d=d_x+d_y+d_z$ is the sum of the dimensions of the three vectors $X$, $Y$, and $Z$. The terms $C_V,C_B>0$ are constant \wrt the number of sample $n$ but exponentially dependent on the dimension $d$ of the variables.\footnote{Intuitively, high dimensional vectors lead to larger uncertainty on the estimator, which may suffer from curse of dimensionality.}
$\beta\in [0,1]$ is the order of the H\"older space to which the joint density $p(x,y,z)$ belongs (e.g., $\beta=1$ implies Lipschitz continuity).\footnote{We refer the interested reader to Section 2 of~\citep{singh2014generalized} for a detailed presentation.}  
Intuitively, the term (i) in  Equation \eqref{eq:CMIconcentration} accounts for the variance of the density estimate and converges to zero with the optimal rate $\mathcal{O}(n^{-\frac{1}{2}})$. Instead, the term (ii) refers to the bias of the density estimator and displays a convergence rate of order  $\mathcal{O}(n^{\frac{-\beta}{\beta+d}})$, which is therefore the overall convergence rate.


\paragraph{Finite-Sample Bound}
Recalling Theorem \ref{thm:backError} and \ref{thm:forwError} in the ideal case (i.e., when the TE is assumed to be exact), the following result rephrases them in the finite-sample case (i.e., when the TE is estimated from the kernel density estimator discussed above). The proof can be found in Appendix \ref{app:TEFS}.

\begin{figure*}[t]
     \centering
     \begin{subfigure}{0.15\textwidth}
         \centering
        \includegraphics[width=\textwidth]{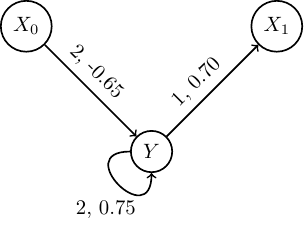}
         \caption{Three variables.}
         \label{fig:graph3}
     \end{subfigure}
     \hfill
     \begin{subfigure}{0.2\textwidth}
         \centering
         \includegraphics[width=\textwidth]{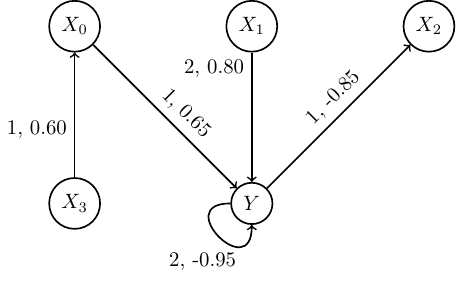}
         \caption{Five variables.}
         \label{fig:graph5}
     \end{subfigure}
     \hfill
     \begin{subfigure}{0.35\textwidth}
         \centering
        \includegraphics[width=\textwidth]{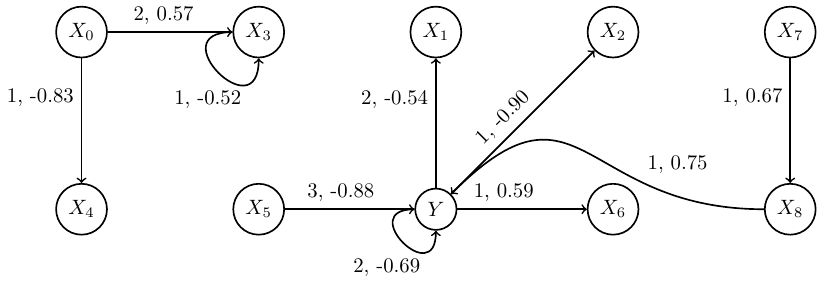}
         \caption{Ten variables.}
         \label{fig:graph10}
     \end{subfigure}
     \hfill
        \caption{Causal graphs considered for synthetic experiments. Each edge reports the time lag and the interaction coefficient.}
        \label{fig:linearSynth}
\end{figure*}

\begin{theorem}\label{corollary:finiteSample}
The regression error suffered considering the variables $X_{\abar}$ selected by Backward TEFS with the estimator of CMI proposed in ~\citep{singh2014exponential}, with $n$ samples, is bounded, with probability at least $1-\eta$, by:
\begin{align*}
    &\inf_{g\in \mathcal{G}_{\abar}}\E_{X,Y}[(Y_t-g(X_{\abar}^{\xtpast},Y_{\ytpast}))^2] \\ &\leq \sigma^2+\delta+2B^2K\left( \sqrt{\frac{4C_V^2}{n}\log\left(\frac{2^{d+1}}{\eta}\right)}+C_{B}n^{\frac{-\beta}{\beta+d}}\right) ,
\end{align*}
where $K$ is the number of iterations performed by the algorithm and $d$ is the dimension of the full set of variables ($d=LD+M+1$).
The regression error suffered considering the selected variables $X_{A}$ by running Forward TEFS, under the same conditions,  is bounded, with probability at least $1-\eta$, by:
\begin{align*}
\begin{split}
        &\inf_{g\in \mathcal{G}_{A}}\E_{X,Y}[(Y_t-g(X_{A}^{\xtpast},Y_{\ytpast}))^2] \\&\leq \sigma^2+2B^2 \cdot TE_{X\rightarrow Y}-\Delta\\& +2B^2K\left( \sqrt{\frac{4C_V^2}{n}\log\left(\frac{2^{d+1}}{\eta}\right)}+C_{B}n^{\frac{-\beta}{\beta+d}} \right).
        \end{split}
    \end{align*}
\end{theorem}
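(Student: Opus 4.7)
The plan is to reduce the finite-sample theorem to the ideal-case results (Theorems \ref{thm:backError} and \ref{thm:forwError}) by controlling the estimation error of the kernel-based CMI estimator uniformly across all subsets that the algorithm could potentially evaluate. First I would fix notation: let $\widehat{TE}_{\cdot \rightarrow \cdot \mid \cdot}$ denote the plug-in estimator from \citep{singh2014exponential} applied to a given CMI, and let $\epsilon_n(\eta') \coloneqq \sqrt{\tfrac{4C_V^2}{n}\log(2/\eta')} + C_B n^{-\beta/(\beta+d)}$ be the per-estimate error from Equation~\eqref{eq:CMIconcentration} at confidence $1-\eta'$. Since both TEFS variants inspect only conditional TEs whose argument and conditioning set are subsets of the $d = LD+M+1$ base variables, the set of candidate estimates has cardinality at most $2^d$. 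Applying \eqref{eq:CMIconcentration} with confidence parameter $\eta/2^d$ and a union bound over this family, with probability at least $1-\eta$ we have, simultaneously for every subset $S$,
\begin{equation*}
    \bigl|TE_S - \widehat{TE}_S\bigr| \leq \epsilon_n\bigl(\eta/2^d\bigr) = \sqrt{\tfrac{4C_V^2}{n}\log\bigl(2^{d+1}/\eta\bigr)} + C_B n^{-\beta/(\beta+d)} \eqqcolon \epsilon_n^\star.
\end{equation*}

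Next, for the backward direction I would use Lemma~\ref{thm:recursiveBackward} to telescope the true cumulative loss as $TE_{X_A\rightarrow Y\mid X_{\bar A}} = \sum_{k=1}^K T_k$, where at iteration $k$ the feature $X_{\hat i_k}$ is the argmin of the \emph{estimated} conditional TEs and $T_k$ denotes the corresponding true value. Since the same recursion applies to the estimator (the plug-in estimator is algebraically a CMI computed from the estimated densities, so the chain-rule identity of Lemma~\ref{thm:recursiveBackward} still holds for $\widehat{TE}$), the estimated cumulative loss equals $\sum_{k=1}^K \widehat{T}_k$. By the triangle inequality and the uniform event above,
\begin{equation*}
    TE_{X_A\rightarrow Y\mid X_{\bar A}} \leq \sum_{k=1}^K \widehat{T}_k + K\epsilon_n^\star \leq \frac{\delta}{2B^2} + K\epsilon_n^\star,
\end{equation*}
where the last inequality follows from the stopping condition on line~\ref{line:6} of Algorithm~\ref{alg:TEFSalgoBack}. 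Plugging this into Theorem~\ref{thm:regrErr} yields the claimed bound $\sigma^2 + \delta + 2B^2 K\epsilon_n^\star$ on the regression error.

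For the forward direction the argument is symmetric: Lemma~\ref{thm:TEaddingAFeature} gives the analogous telescoping identity for $TE_{X_A\rightarrow Y}$, so on the same high-probability event $\widehat{TE}_{X_A\rightarrow Y} \geq TE_{X_A\rightarrow Y} - K\epsilon_n^\star$; combined with the stopping condition $\widehat{TE}_{X_A\rightarrow Y} \leq \Delta/(2B^2)$ and the identity $TE_{X\rightarrow Y} = TE_{X_A\rightarrow Y} + TE_{X_{\bar A}\rightarrow Y\mid X_A}$, one bounds the residual $TE_{X_{\bar A}\rightarrow Y\mid X_A}$ above by $TE_{X\rightarrow Y} - \Delta/(2B^2) + K\epsilon_n^\star$ and substitutes into Theorem~\ref{thm:regrErr}. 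The main obstacle I anticipate is getting the union bound's scope right: one must ensure that \emph{all} subset-level CMIs that could be queried by the data-dependent algorithm trajectory are simultaneously well-estimated, which forces the $2^d$ factor (and hence the $\log(2^{d+1}/\eta)$ term) rather than a naive $K$-fold union bound; a secondary subtlety is verifying that the chain-rule identities of Lemmas~\ref{thm:recursiveBackward} and~\ref{thm:TEaddingAFeature} hold verbatim for the plug-in estimator, which is immediate because they are algebraic identities of CMI valid for any joint density, including the estimated one.
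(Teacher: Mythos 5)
Your proposal is correct and follows essentially the same route as the paper: telescope the residual conditional TE into $K$ per-iteration terms via Lemmas~\ref{thm:recursiveBackward} and~\ref{thm:TEaddingAFeature}, replace each term by its estimate plus the concentration error of Equation~\eqref{eq:CMIconcentration}, take a union bound over the $2^d$ possible subsets to obtain the $\log(2^{d+1}/\eta)$ factor, invoke the stopping conditions, and substitute into Theorem~\ref{thm:regrErr}. The only slip is a sign direction in the forward case --- at termination the algorithm guarantees $\widehat{TE}_{X_A\rightarrow Y}\geq \Delta/(2B^2)$, not $\leq$ --- but the bound you then derive uses the correct direction, so this is merely a typo.
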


    Analogous results hold in classification, extending the ideal Bayes error of Theorem \ref{thm:backError}, \ref{thm:forwError} in the finite-sample case.
\begin{remark}[About the choice of $M$ and $L$ in a finite-sample setting]
       
\end{remark}
  

\section{Experimental Validation}\label{sec:experiments}

In this section, we conduct an experimental validation of the proposed algorithms on both synthetic (Section~\ref{subsec:synth}) and real-world (Section~\ref{sec:realWorld}) domains.\footnote{Code and datasets can be found at: \url{https://www.dropbox.com/scl/fo/ctvxqh8q131f8sim4s22o/h?dl=0&rlkey=5hsbpold82bi8vilu44ubb508}.}

\subsection{Synthetic Experiments}\label{subsec:synth}
We analyse the behaviour of the forward and backward TEFS \wrt the forward and backward feature selection algorithms based on CMI proposed in~\citep{Beraha2019} and with the PCMCI algorithm~\citep{runge2019detecting}, a state-of-the-art causal discovery approach. Details and additional results can be found in Appendix \ref{app:experiments}.

Figure \ref{fig:linearSynth} shows the graph built to perform the experiments in three, five, and ten dimensions, with the time lag of the interaction between each pair of variables and the coefficient of their linear interaction. Indeed, the structural causal model of each variable is assumed to be linear, with additive Gaussian noise. Each of the three experiments is repeated ten times with different seeds, considering the coefficients reported in the figure, $N=300$ samples, maximum time lags $M=L=2$ for the two and five-dimensional data, and $M=L=3$ for the ten-dimensional data, and additive Gaussian noise with standard deviation $s=0.1$. Then, the individual effect of each of these parameters is studied, fixing the other parameters. To further inspect the impact of high dimensionality, we also consider $D\in\{15,20,40,60,80,100\}$, preserving the relationships and coefficients as in Figure \ref{fig:graph10} and adding independent variables that can be autocorrelated or interact with each other. 

Each model is evaluated with the true positive rate (TPR) and false positive rate (FPR) on the number of causal links detected, leading to some considerations on the algorithms. Both forward and backward CMI feature selection do not consider the target in the conditioning. Hence they tend to select features that are caused by the target as well. On the contrary, forward and backward TEFS and the PCMCI algorithm tend to identify the features causally relevant to the target, along with its autoregressive component. Additionally, PCMCI tends to consider more features to avoid false negatives at the cost of a higher false positive rate. In contrast, the two versions of the TEFS algorithm are more focused on identifying the causal features that are most significant in terms of regression performance. This translates into better control of the FPR, at the cost of possibly not considering some less relevant causal links, worsening the TPR. Table \ref{tab:synth3D} shows, for the three-dimensional setting, the complete results in terms of TPR and FPR,
In Appendix \ref{app:experiments}, it is possible to find the same results for a larger number of features $D\in\{5,10,15,20,40,60,80,100\}$. 

\begin{table*}[ht]
\caption{Three-dimensional experiments on synthetic datasets. Each experiment has been repeated ten times, with different seeds, varying one parameter at a time.
\label{tab:synth3D}}
\fontsize{8.5pt}{8.5pt}\selectfont
\centering 
\begin{tabular}{@{}cccccc@{}} \hline 
\ & Forward CMI & Backward CMI & PCMCI & \cellcolor{lightgray!25} Forward TEFS & \cellcolor{lightgray!25} Backward TEFS \\\hline 
\textbf{Benchmark} & & & & \cellcolor{lightgray!25} & \cellcolor{lightgray!25} \\
TPR & $0.5$ & $0.5$ & $1.0$ & \cellcolor{lightgray!25} $1.0$ & \cellcolor{lightgray!25} $1.0$\\
FPR & $1$ & $1$ & $0.1$ & \cellcolor{lightgray!25} $0$ & \cellcolor{lightgray!25} $0$ \\
\hline
\textbf{Increasing Noise} & & & & \cellcolor{lightgray!25} & \cellcolor{lightgray!25} \\
TPR & $[0.5, 0.5, 0.5, 0.5, 0.45]$ & $[0.5, 0.5, 0.5, 0.5, 0.45]$  & $[1, 1, 1, 1, 1]$ & \cellcolor{lightgray!25} $[1, 1, 1, 0.95, 0.9]$ & \cellcolor{lightgray!25} $[1, 1, 1, 1, 1]$\\
FPR & $[1,1,1,1,1]$ & $[1,1,1,1,1]$ & $[0.1, 0, 0.1, 0.3, 0.4]$ & \cellcolor{lightgray!25} $[0, 0, 0, 0, 0]$ & \cellcolor{lightgray!25} $[0, 0, 0, 0, 0]$ \\
\hline
\textbf{Increasing $\tau$} & & & & \cellcolor{lightgray!25} & \cellcolor{lightgray!25} \\
TPR & $[0.5, 0.5, 0.5, 0.5]$ & $[0.5, 0.5, 0.5, 0.5]$  & $[1, 1, 1, 1]$ & \cellcolor{lightgray!25} $[1, 1, 1, 1]$ & \cellcolor{lightgray!25} $[1, 1, 1, 1]$\\
FPR & $[1,1,1,1]$ & $[1,1,1,1]$ & $[0.1, 0, 0.3, 0.3]$ & \cellcolor{lightgray!25} $[0, 0, 0, 0]$ & \cellcolor{lightgray!25} $[0, 0, 0, 0]$ \\
\hline
\textbf{Increasing N} & & & & \cellcolor{lightgray!25} & \cellcolor{lightgray!25} \\
TPR & $[0.5, 0.5, 0.5, 0.5, 0.5]$ & $[0.5, 0.5, 0.5, 0.5, 0.5]$ & $[0.9, 1, 1, 1, 1]$ & \cellcolor{lightgray!25} $[1, 1, 1, 1, 1]$ & \cellcolor{lightgray!25} $[1, 1, 1, 1, 1]$\\
FPR & $[1,1,1,1,1]$ & $[1,1,1,1,1]$ & $[0.5, 0.2, 0.1, 0.1, 0.1]$ & \cellcolor{lightgray!25} $[0, 0, 0, 0, 0]$ & \cellcolor{lightgray!25} $[0, 0, 0, 0, 0]$ \\
\hline
\textbf{Random Coefficients} & & & & \cellcolor{lightgray!25} & \cellcolor{lightgray!25} \\
TPR & $[0.5,0.5,0.5,0.5,0.3]$ & $[0.5,0.5,0.5,0.5,0.3]$ & $[1,1,1,0.75,1]$ & \cellcolor{lightgray!25} $[1,1,1,1,1]$ & \cellcolor{lightgray!25} $[1,1,1,1,1]$\\
FPR & $[1, 1, 1, 1, 1]$ & $[1, 1, 1, 1, 1]$ & $[0.2,0.2,0.1,0.1,0.2]$ & \cellcolor{lightgray!25} $[0,0,0,0,0.1]$ & \cellcolor{lightgray!25} $[0,0,0,0,0.1]$ \\
\hline
\end{tabular}
\end{table*}

\begin{table*}[t]
\fontsize{8.5}{8.5}\selectfont
\caption{Real-world experiments for three different datasets. Each experiment has been repeated three times, considering different time-depths $M=L\in\{1,2,3\}$. The table reports the set of IDs of the selected features and the related $R^2$ test score.
\label{tab:realRed}}
\centering
\begin{tabular}{@{}cccccc@{}} \hline 
\ & Best\_CMI & Best\_CI & \cellcolor{lightgray!25} Forward TEFS  & \cellcolor{lightgray!25} Backward TEFS \\\hline 
\textbf{Climate, 5 features} & & & \cellcolor{lightgray!25} & \cellcolor{lightgray!25} \\
$M=L=1$ & $\{0,3,4\}\rightarrow0.34$ & $\{0,2,4\}\rightarrow0.45$ & \cellcolor{lightgray!25} $\{0,4\}\rightarrow0.44$ & \cellcolor{lightgray!25} $\{0,4\}\rightarrow0.44$\\
$M=L=2$ & $\{0,1\}\rightarrow0.36$ & $\{0,1,3,4\}\rightarrow0.47$ & \cellcolor{lightgray!25} $\{0,4\}\rightarrow0.42$ & \cellcolor{lightgray!25} $\{0,4\}\rightarrow0.42$ \\
$M=L=3$ & $\{0,1\}\rightarrow0.35$ & $\{0,3,4\}\rightarrow0.48$ & \cellcolor{lightgray!25} $\{0,4\}\rightarrow0.47$ & \cellcolor{lightgray!25} $\{0,4\}\rightarrow0.47$ \\
\hline
\textbf{Climate, 15 features} & & & \cellcolor{lightgray!25} & \cellcolor{lightgray!25} \\
$M=L=1$ & $\{0,2,3,11\}\rightarrow0.29$ & $\{0,2,7\}\rightarrow0.45$ & \cellcolor{lightgray!25} $\{0,7\}\rightarrow0.44$ & \cellcolor{lightgray!25} $\{0,11\}\rightarrow0.44$\\
$M=L=2$ & $\{0,2,11\}\rightarrow0.30$ & $\{5,7,11\}\rightarrow0.47$ & \cellcolor{lightgray!25} $\{0,7\}\rightarrow0.43$ & \cellcolor{lightgray!25} $\{0,11\}\rightarrow0.42$ \\
$M=L=3$ & $\{0,5\}\rightarrow0.28$ & $\{2,7\}\rightarrow0.48$ & \cellcolor{lightgray!25} $\{7\}\rightarrow0.47$ & \cellcolor{lightgray!25} $\{11\}\rightarrow0.43$ \\
\hline
\textbf{Benchmark, 5 features} & & & \cellcolor{lightgray!25} & \cellcolor{lightgray!25} \\
$M=L=1$ & $\{3,4\}\rightarrow0.09$ & $\{3,4\}\rightarrow0.09$ & \cellcolor{lightgray!25} $\{3,4\}\rightarrow0.09$ & \cellcolor{lightgray!25} $\{3,4\}\rightarrow0.09$\\
$M=L=2$ & $\{3\}\rightarrow0.07$ & $\{3,4\}\rightarrow0.13$ & \cellcolor{lightgray!25} $\{4\}\rightarrow0.09$ & \cellcolor{lightgray!25} $\{4\}\rightarrow0.09$\\
$M=L=3$ & $\{0,3\}\rightarrow0.07$ & $\{0,3,4\}\rightarrow0.13$ & \cellcolor{lightgray!25} $\{3\}\rightarrow0.11$ & \cellcolor{lightgray!25} $\{3\}\rightarrow0.11$\\
\hline
\end{tabular}
\end{table*}

\subsection{Real-World Experiments}\label{sec:realWorld}
We conclude the validation with three applications to real-world data. In particular, we firstly analyse a climatic datasets where the target represents the vegetation state of a basin of the Po River, and the five features are meteorological variables related to temperature and precipitation. To increase the dimensionality of the data, we also consider a similar dataset with fifteen meteorological features. Finally, to further test the algorithm on a benchmark dataset, we consider a dataset composed of five variables available in an online causal repository (\url{https://causeme.uv.es/}), where we consider in turn each variable to be the target and the other four variables the candidate causal features.
Ideally, we would know the real underlying causal relationships to validate the algorithms, which is unrealistic in complex problems. For this reason, motivated by the feature selection inspiration of this work, we propose the regression performance (in terms of coefficient of determination $R^2$) to draw some conclusions on the effectiveness of the methods. To further inspect the performance of the proposed approaches, along with forward and backward feature selection based on CMI, we consider as causal discovery baselines PCMCI and FullCI, which are implemented in the same library (\url{https://github.com/jakobrunge/tigramite.git}). These two methods are applied considering both linear and non-linear independence tests. Table \ref{tab:realRed} shows the selected set of features and the associated $R^2$ test score, considering increasing past depth $L=M\in\{1,2,3\}$. For clarity, only the best-performing result among the linear and non-linear PCMCI and FullCI applications (\emph{best\_CI}) is reported. The same applies to the forward and backward CMI approaches (\emph{best\_CMI}). Furthermore, the three datasets in the table are one climatic target with five features, the climatic target with fifteen features, and the benchmark dataset with five variables, where the third variable is taken as the target. The complete results for all the considered datasets can be found in Appendix \ref{app:experiments}. As discussed for the synthetic experiments, from the results, it is possible to observe the tendency of the \emph{Forward and Backward TEFS}, \wrt PCMCI and FullCI, to consider a reduced set of the most related causal features in terms of TE. In the empirical results, this leads to an empirical performance that preserves most of the information while significantly reducing dimensionality. Therefore, the $R^2$ scores of the proposed algorithms are similar to or slightly worse than the best causal discovery counterparts, which consider more features as they aim to identify all possible causal links.
More extensive applications of \emph{Forward and Backward TEFS} on complex real data are the subject of future research.

\section{Conclusions and Future Developments}\label{sec:conclusions}
In this paper, we introduced two novel causal feature selection approaches for time series, \emph{Forward} and \emph{Backward TEFS}, relying on conditional TE to guide the selection. We provided an analysis of the information loss due to the selection procedure in the exact and in the finite-sample cases. Our theoretical results validate the soundness of the two approaches, formally highlighting the role of the hyperparameters in controlling the final selection.
Then, we conducted synthetic experiments in comparison with standard CMI feature selection and the state-of-the-art PCMCI, highlighting the abilities of our method to successfully uncover causal relationships.
Finally, we performed experiments on real-world datasets (where no ground truth of causal relationships is present), proving that our methods display competitive results in terms of regression scores.
A promising future development is related to the \emph{invariance} framework, which relies on the intuition that a variable causing the target maintains the same relationship with the target itself, even if its
own distribution changes~\citep{arjovsky2019invariant,buhlmann2020invariance}. The concentration bound of Equation \eqref{eq:CMIconcentration} could be therefore exploited to test the capability of the selected features to generalize across heterogeneous environments.

\bibliography{aaai24}

\clearpage

\appendix

\section{Transfer Entropy for Causal Feature Selection: additional proofs and results}\label{app:TEFS}
This section contains proofs and additional results of Section \ref{sec:TEFS} and \ref{sec:TEinv}. To simplify the notation, $X_A^{-t},X_{\abar}^{-t}$ will denote the values of the set of features with indices in $A$ or its complementary $\abar$ in the timesteps $\xtpast$. Moreover, $Y_{-t}$ will denote the last values of the target $Y_{\ytpast}$ and $Y_t$ will be the current value of the target itself.
\begin{proof}[Proof of Theorem \ref{thm:regrErr}]
Since the best regression model, given the inputs, is the conditional expected value of the target, the following equality holds:

\begin{equation*}
\begin{aligned}
    \inf_{g\in \mathcal{G}_{\abar}}\E_{X,Y}[(Y_t-g(X_{\abar}^{-t},Y_{-t}))^2] \\= \E_{X,Y}[(Y_t-\E[Y_t|X_{\abar}^{-t},Y_{-t}])^2].
\end{aligned}
\end{equation*}

Adding and subtracting $\E[Y_t|X^{-t},Y_{-t}]$ to the inner term and writing the outer expected value as integral, the term above is equal to:
{\fontsize{6.5}{7.5}
\begin{equation*}
\begin{aligned}
     &\int p(x^{-t},y_{-t}) \int p(y_t|x^{-t},y_{-t}) \big( y_t-\E[Y_t|X_{\abar}^{-t},Y_{-t}] \\ &\pm \E[Y_t|X^{-t},Y_{-t}] \big)^2 dy_tdx^{-t}dy_{-t}\\
    =& \int p(x^{-t},y_{-t}) \int p(y_t|x^{-t},y_{-t}) \big( y_t-\E[Y_t|X^{-t},Y_{-t}] \big)^2 dy_tdx^{-t}dy_{-t}\\
    +&\int p(x^{-t},y_{-t}) \int p(y_t|x^{-t},y_{-t}) \left(\E[Y_t|X^{-t},Y_{-t}]-\E[Y_t|X_{\abar}^{-t},Y_{-t}] \right)^2\\ 
    +&2\E_{X,Y}[( y_t-\E[Y_t|X^{-t},Y_{-t}] )\cdot (\E[Y_t|X^{-t},Y_{-t}]-\E[Y_t|X_{\abar}^{-t},Y_{-t}] )].
    \end{aligned}
\end{equation*}
}
We now analyse each term of the right-hand side of this equation. 

The first term is the irreducible mean squared error:
{\fontsize{6.5}{7}
\begin{equation*}
\begin{aligned}
    \int p(x^{-t},y_{-t}) & \int p(y_t|x^{-t},y_{-t}) \left( y_t-\E[Y_t|X^{-t},Y_{-t}] \right)^2 dy_tdx^{-t}dy_{-t} \\
    &=\E_{X,Y}[(Y_t-\E[Y_t|X^{-t},Y_{-t})^2] = \sigma^2.
    \end{aligned}
\end{equation*}
}
The third term is equal to zero:
{\fontsize{7.5}{8}
\begin{equation*}
\begin{aligned}
    \E_{X,Y}[ &( Y_t-\E[Y_t|X^{-t},Y_{-t}] )\cdot (\E[Y_t|X^{-t},Y_{-t}]-\E[Y_t|X_{\abar}^{-t},Y_{-t}] )]\\
    =&\E_{X,Y_{-t}}\big[\E_{Y_t}[( Y_t-\E[Y_t|X^{-t},Y_{-t}] ) \\ &\cdot (\E[Y_t|X^{-t},Y_{-t}]-\E[Y_t|X_{\abar}^{-t},Y_{-t}] )]|X^{-t},Y_{-t}\big]\\
    =&\E_{X,Y_{-t}}[( \E_{Y_t}[Y_t|X^{-t},Y_{-t}]-\E[Y_t|X^{-t},Y_{-t}] )\\&\cdot (\E[Y_t|X^{-t},Y_{-t}]-\E[Y_t|X_{\abar}^{-t},Y_{-t}] )]=0,
    \end{aligned}
\end{equation*}
}
where the second equality holds since only the first term ($Y_t$) depends on $Y_t$ itself.

Finally, the second term can be bounded as:
{\fontsize{6.5}{7}
    \begin{align*}
        & \int p(x^{-t},y_{-t}) \int p(y_t|x^{-t},y_{-t}) \\&\left(\E[Y_t|X^{-t},Y_{-t}]-\E[Y_t|X_{\abar}^{-t},Y_{-t}] \right)^2 dy_tdx^{-t}dy_{-t}\\
        = &  \int p(x^{-t},y_{-t}) \int p(y_t|x^{-t},y_{-t}) \\&\left(\int y_t\cdot (p(y_t|x^{-t},y_{-t})-p(y_t|x_{\abar}^{-t},y_{-t}))dy_t\right)^2 dy_tdx^{-t}dy_{-t}\\
        \leq & B^2 \int p(x^{-t},y_{-t}) \\&\left(\int |p(y_t|x^{-t},y_{-t})-p(y_t|x_{\abar}^{-t},y_{-t})|dy_t\right)^2 dx^{-t}dy_{-t}\\
        = & B^2 \int p(x^{-t},y_{-t}) \\& \left(2\cdot D_{TV}(p(y_t|x^{-t},y_{-t})||p(y_t|x_{\abar}^{-t},y_{-t}))\right)^2 dx^{-t}dy_{-t}\\
        \leq & B^2 \int p(x^{-t},y_{-t}) \\&\left(2\cdot \sqrt{\frac{1}{2}D_{KL}(p(y_t|x^{-t},y_{-t})||p(y_t|x_{\abar}^{-t},y_{-t}))}\right)^2 dx^{-t}dy_{-t}\\
        = & 2B^2 \int p(x^{-t},y_{-t}) D_{KL}(p(y_t|x^{-t},y_{-t})||p(y_t|x_{\abar}^{-t},y_{-t})) dx^{-t}dy_{-t}\\
        = & 2B^2 \E_{X,Y_{-t}}[D_{KL}(p(y_t|x^{-t},y_{-t})||p(y_t|x_{\abar}^{-t},y_{-t}))]\\
        = & 2B^2I(Y_t,X_A^{-t}|X_{\abar}^{-t},Y_{-t}) = 2B^2\cdot TE_{X_{A}\rightarrow Y| X_{\abar}}.
    \end{align*}
    }
The first equality rewrites the inner expected values as integrals. Then, the inequality holds applying the absolute value in the inner term and recalling that $|Y_t|\leq B$ by assumption. The second equality follows since the total variation distance is half the $L_1$ norm. Finally, the second inequality follows from Pinsker's inequality and the last three equations simply rearrange the terms and apply the definitions of conditional mutual information and transfer entropy.

Combining the three terms analysed, the result of the theorem follows.
\end{proof}

\begin{proof}[Proof of Theorem \ref{thm:classErr}]
    The optimal class that can be predicted, given the inputs, is the one with the largest probability. Therefore, it is equal to {\fontsize{7.5}{8}$y_t^* = \arg\max_{y_t\in\mathcal{Y}}p(y_t|X^{-t},Y_{-t})$} or {\fontsize{7.5}{8}$y^*_{t,\abar} = \arg\max_{y_t\in\mathcal{Y}}p(y_t|X_{\abar}^{-t},Y_{-t})$} respectively if the full set of features or only the selected ones is considered. 

    The expected prediction error considering the selected features, for the best model, is therefore equal to:
    {\fontsize{7.5}{8}
    \begin{equation*}
    \begin{aligned}
    &\inf_{g\in \mathcal{G}_{\abar}}\E_{X,Y}\left[\mathbb{1}_{\{Y_t\neq g(X_{\abar}^{-t},Y_{-t})\}}\right] = \E_{X,Y}\left[ \mathbb{1}_{\{Y_t\neq Y^*_{t,\abar}\}} \right] \\&= \epsilon + \E_{X,Y}\left[ \mathbb{1}_{\{Y_t\neq Y^*_{t,\abar}\}}-\mathbb{1}_{\{Y_t\neq Y^*_{t}\}} \right],
    \end{aligned}
\end{equation*}
}
where the second equality follows summing and subtracting the term $\epsilon=\E_{X,Y}\left[ \mathbb{1}_{\{Y_t\neq Y^*_{t}\}}\right]$, that is equal to the irreducible Bayes error suffered considering the entire set of features. 

We now focus on the expected value obtained, rewriting it as an integral:
{\fontsize{7.5}{8}
\begin{align*}
    & \E_{X,Y} \left[ \mathbb{1}_{\{Y_t\neq Y^*_{t,\abar}\}}-\mathbb{1}_{\{Y_t\neq Y^*_{t}\}} \right] \\ 
    & = \int p(x^{-t},y_{-t}) ( \E_{y_t}[ \mathbb{1}_{\{Y_t\neq Y^*_{t,\abar}\}}  |X^{-t},Y_{-t}] \\& - \E_{y_t}[ \mathbb{1}_{\{Y_t\neq Y^*_{t}\}} |X^{-t},Y_{-t}]) dx^{-t}dy_{-t}.
\end{align*}
}
Focusing on the inner expected values:
{\fontsize{7.5}{8}
\begin{align*}
    &\E_{y_t}\left[ \mathbb{1}_{\{Y_t\neq Y^*_{t}\}}|X^{-t},Y_{-t} \right] = 1-\E_{y_t}\left[ \mathbb{1}_{\{Y_t= Y^*_{t}\}}|X^{-t},Y_{-t} \right] \\ & = 1-p(Y^*_{t}|X^{-t},Y_{-t}),
\end{align*}
}
where the equalities follow from the definition of the expected value of the complement of a set and of the expected value of the indicator function. Intuitively, the equality states that the expected error when the target is estimated with its maximum probability given the features and the past of the target is equal to $1$ minus the probability of correctly estimating the target with its maximum conditional expected probability given the features and the past values of the target. A similar result holds conditioning only on the selected features, allowing rewriting the equation above as:
{\fontsize{7.5}{8}
\begin{align*}
    \E_{X,Y} & \left[ \mathbb{1}_{\{Y_t\neq Y^*_{t,\abar}\}}-\mathbb{1}_{\{Y_t\neq Y^*_{t}\}} \right] \\ 
    & = \int p(x^{-t},y_{-t}) (p(y^*_{t}|x^{-t},y_{-t}) \\
    & -p(y^*_{t,\abar}|x^{-t},y_{-t})) dx^{-t}dy_{-t}\\
    & = \int p(x^{-t},y_{-t}) (p(y^*_{t}|x^{-t},y_{-t}) -p(y^*_{t,\abar}|x^{-t},y_{-t}) \\& \pm p(y^*_{t,\abar}|x_{\abar}^{-t},y_{-t}) ) dx^{-t}dy_{-t}.\\
\end{align*}
}

Focusing on each inner term individually:
{\fontsize{7.5}{8}
\begin{align*}
    &p(y^*_{t}|x^{-t},y_{-t}) -p(y^*_{t,\abar}|x_{\abar}^{-t},y_{-t}) \\& = \max_{y_t\in \mathcal{Y}} p(y_t|x^{-t},y_{-t})-\max_{y_t\in \mathcal{Y}} p(y_{t}|x_{\abar}^{-t},y_{-t})\\
    & \leq \max_{y_t\in \mathcal{Y}} | p(y_t|x^{-t},y_{-t})-p(y_{t}|x_{\abar}^{-t},y_{-t}) | \\
    &\leq D_{TV}(p(y_t|x^{-t},y_{-t})||p(y_{t}|x_{\abar}^{-t},y_{-t}))\\
    & \leq \sqrt{\frac{1}{2}D_{KL}(p(y_t|x^{-t},y_{-t})||p(y_{t}|x_{\abar}^{-t},y_{-t}))},
\end{align*}
}
where the first equality follows by definition of $y_t^*,y^*_{t,\abar}$, the first inequality holds since the difference of maxima is smaller than or equal to the maximum absolute value of the difference and the second inequality holds by definition of total variation distance. Finally, applying Pinsker's inequality, the third inequality holds. Similarly, the second term inside the integral can be bounded as follows:
{\fontsize{7.5}{8}
\begin{align*}
    &p(y^*_{t,\abar}|x_{\abar}^{-t},y_{-t})-p(y^*_{t,\abar}|x^{-t},y_{-t}) \\
    &\leq \sqrt{\frac{1}{2}D_{KL}(p(y_t|x^{-t},y_{-t})||p(y_{t}|x_{\abar}^{-t},y_{-t}))}. 
\end{align*}
}
Therefore, the integral becomes:
{\fontsize{7.5}{8}
\begin{align*}
    &\int p(x^{-t},y_{-t})  (p(y^*_{t}|x^{-t},y_{-t}) -p(y^*_{t,\abar}|x^{-t},y_{-t}) \\&\pm p(y^*_{t,\abar}|x_{\abar}^{-t},y_{-t}) ) dx^{-t}dy_{-t}\\
    & \leq \int p(x^{-t},y_{-t}) \sqrt{2D_{KL}(p(y_t|x^{-t},y_{-t})||p(y_{t}|x_{\abar}^{-t},y_{-t}))} dx^{-t}dy_{-t}\\
    & \leq \sqrt{2\int p(x^{-t},y_{-t}) D_{KL}(p(y_t|x^{-t},y_{-t})||p(y_{t}|x_{\abar}^{-t},y_{-t}))} dx^{-t}dy_{-t}\\
    & = \sqrt{2\E_{X,Y_{-t}} [D_{KL}(p(y_t|x^{-t},y_{-t})||p(y_{t}|x_{\abar}^{-t},y_{-t}))] } \\
    & = \sqrt{2I(Y_t;X_A^{-t}|X_{\abar}^{-t},Y_{-t})} = \sqrt{2\cdot TE_{X_{A}\rightarrow Y| X_{\abar}}}.
\end{align*}
}
Substituting this result in the first equation of this proof we obtain Equation \ref{eq:classErr} of the main paper, proving the theorem.
\end{proof}

\begin{proof}[Proof of Lemma \ref{thm:recursiveBackward}]
    Let {\fontsize{7.5}{8}$X_{\hat{i}}=\arg\min_{X_i\in X_{\abar}} TE_{X_{i}\rightarrow Y|X_{\abar}\setminus X_{i}}$}, then $T(X_A\cup X_{\hat{i}})$ is defined as:
    {\fontsize{7.5}{8}
    \begin{align*}
    &T(X_{A}\cup X_{\hat{i}}) = TE_{X_{A}\cup X_{\hat{i}}\rightarrow Y|X_{\abar}\setminus X_{\hat{i}}}\\&=I(Y_t;\{X_{A}\cup X_{\hat{i}}\}^{\xtpast}|\{X_{\abar}\setminus X_{\hat{i}}\}^{\xtpast},Y_{\ytpast}).
\end{align*}
}
Then, applying multiple times the chain rule of the conditional mutual information and the definition of {\fontsize{7.5}{8}$T(X_A)=TE_{X_{A}\rightarrow Y|X_{\abar}}$} and {\fontsize{7.5}{8}$T_{\abar}=\min_{X_i\in X_{\abar}} TE_{X_{i}\rightarrow Y|X_{\abar}\setminus X_{i}}$}, the following equalities hold, proving the theorem:
{\fontsize{7.5}{8}
\begin{align*}
    &T(X_{A}\cup X_{\hat{i}}) = TE_{X_{A}\cup X_{\hat{i}}\rightarrow Y|X_{\abar}\setminus X_{\hat{i}}}\\
    & = I(Y_t;X^{\xtpast},Y_{\ytpast}) - I(Y_t;\{{X_{\abar}\setminus X_{\hat{i}}}\}^{\xtpast},Y_{\ytpast})\\
    & = I(Y_t;X_{\abar}^{\xtpast},Y_{\ytpast}) \\&+ I(Y_t;X_{A}^{\xtpast}|X_{\abar}^{\xtpast},Y_{\ytpast})\\
    & \quad -I(Y_t;\{{X_{\abar}\setminus X_{\hat{i}}}\}^{\xtpast},Y_{\ytpast})\\
    & = I(Y_t;\{X_{\abar}\setminus X_{\hat{i}}\}^{\xtpast},X_{\hat{i}}^{\xtpast},Y_{\ytpast}) + T(X_A)\\
    & \quad - I(Y_t;\{{X_{\abar}\setminus X_{\hat{i}}}\}^{\xtpast},Y_{\ytpast})\\
    & = T(X_A) + I(Y_t;X_{\hat{i}}^{\xtpast}|\{X_{\abar}\setminus X_{\hat{i}}\}^{\xtpast},Y_{\ytpast})\\
    & \quad \pm I(Y_t;\{{X_{\abar}\setminus X_{\hat{i}}}\}^{\xtpast},Y_{\ytpast})\\
    & = T(X_A)\cup T_{\abar}.
\end{align*}
}
\end{proof}

\begin{proof}[Proof of Lemma \ref{thm:TEaddingAFeature}]
Let us consider $TE_{X_{A}\cup X_i \rightarrow Y}$, which is defined in terms of conditional mutual information as:
{\fontsize{7.5}{8}
$$ TE_{X_{A}\cup X_i \rightarrow Y} = I(Y_t;\{X_{A}\cup X_i\}^{\xtpast}|Y_{\ytpast}). $$
}
Therefore, applying the chain rule the result follows:
{\fontsize{7.5}{8}
\begin{align*}
         TE_{X_{A}\cup X_i \rightarrow Y} & = I(Y_t;\{X_{A}\cup X_i\}^{\xtpast}|Y_{\ytpast})\\
        &  =   I(Y_t;X_i^{\xtpast}|X_{A}^{\xtpast},Y_{\ytpast}) \\
        &+ I(Y_t;X_{A}^{\xtpast}|Y_{\ytpast})\\
        & =
        TE_{X_{i}\rightarrow Y|X_{A}} + TE_{X_{A}\rightarrow Y}.
    \end{align*}
}
\end{proof}

\begin{proof}[Proof of Theorem \ref{thm:forwError}]
In the forward approach of Algorithm \ref{alg:TEFSalgoForw}, which selects the set of features $X_A$, the ideal regression and classification error are respectively equal to {\fontsize{7.5}{8}$\inf_{g\in \mathcal{G}_{A}}\E_{X,Y}[(Y_t-g(X_{A}^{\xtpast},Y_{\ytpast}))^2]$} and {\fontsize{7.5}{8}$\inf_{g\in \mathcal{G}_{A}}\E_{X,Y}\left[\mathbb{1}_{Y_t\neq g(X_{A}^{\xtpast},Y_{\ytpast})}\right]$}. To prove the theorem, we first need the following remark, which allows us to rewrite in a forward fashion the upper bound to the ideal error.

\begin{remark}\label{rem:errorReformulation}
    From Theorem \ref{thm:regrErr} and \ref{thm:classErr} of the main paper, applying the chain rule of conditional mutual information, the ideal regression mean squared error and classification Bayes error can be bounded as:
    {\fontsize{7.5}{8}
    \begin{align*}
    &\inf_{g\in \mathcal{G}_{A}}\E_{X,Y}[(Y_t-g(X_{A}^{\xtpast},Y_{\ytpast}))^2] \\
    &\leq \sigma^2 + 2B^2\cdot TE_{X_{\abar}\rightarrow Y| X_{A}} = \sigma^2 + 2B^2\cdot (TE_{X\rightarrow Y} - TE_{X_A \rightarrow Y}),\\
    &\inf_{g\in \mathcal{G}_{A}}\E_{X,Y}\left[\mathbb{1}_{Y_t\neq g(X_{A}^{\xtpast},Y_{\ytpast})}\right] \leq \epsilon + \sqrt{2\cdot TE_{X_{\abar}\rightarrow Y| X_{A}}}\\
    &= \epsilon + \sqrt{2\cdot (TE_{X\rightarrow Y} - TE_{X_A \rightarrow Y})}.
\end{align*}
}
This rephrasing of the statements shows that the maximisation performed in the feature selection approach of Algorithm \ref{alg:TEFSalgoForw} is consistent with the theory, since the maximisation of the term $TE_{X_A \rightarrow Y}$ minimizes the error.
\end{remark}

We are now ready to prove the relationship between the regression and classification error and the chosen amount of information desired, $\Delta$.

Recalling that at the first iteration of Algorithm \ref{alg:TEFSalgoForw} we initialize the set of selected features to be empty, $X_A=\emptyset$, it follows that at the beginning $TE_{X_A\rightarrow Y}=0$. Recursively applying Equation \ref{eq:TEaddingAFeature}, after the final iteration we have $TE_{X_A\rightarrow Y} = \sum_{k=1}^K TE_{X_k\rightarrow Y | X_{A_k} },$ where $K$ is the number of iterations of the while cycle and $X_k,X_{A_k}$ are respectively the selected feature and the set of selected features during the k-th iteration. Therefore, when the algorithm stops we have $TE_{X_A\rightarrow Y} \geq \frac{\Delta}{2B^2}$ (or $\frac{\Delta^2}{2}$ in classification). Therefore, the theorem follows by substituting these values in the reformulation of the ideal regression and classification error introduced in Remark \ref{rem:errorReformulation}
\end{proof}

\begin{proof}[Proof of Theorem \ref{corollary:finiteSample}]
In the backward case, the following inequality holds from Theorem \ref{thm:backError}:
{\fontsize{7.5}{8}
    \begin{align*}
        &\inf_{g\in \mathcal{G}_{\abar}}\E_{X,Y}[(Y_t-g(X_{\abar}^{\xtpast},Y_{\ytpast}))^2] \\&\leq \sigma^2+2B^2\cdot \sum_{k=1}^K TE_{X_{k}\rightarrow Y|X_{\abar_k}\setminus X_{k}},
    \end{align*}
}
    where $K$ is the number of iterations of the while cycle and $X_k,X_{\abar_k}$ are respectively the removed feature and the set of candidate selected features during the $k-th$ iteration. Substituting each transfer entropy term with the confidence interval of its estimator from Equation \ref{eq:CMIconcentration}, and recalling that at the end of the algorithm {\fontsize{7.5}{8}$\sum_{k=1}^K \hat{TE}_{X_{k}\rightarrow Y|X_{\abar_k}\setminus X_{k}}\leq \frac{\delta}{2B^2}$}, the result follows. Notice that, to guarantee that any possible subset of features contemporaneously satisfies the concentration inequality of Equation \ref{eq:CMIconcentration} with probability $\eta$, a term $2^d$ is added in the concentration exploiting the union bound over all possible subsets of features.

    Similarly, in the forward case, the following inequality holds from Theorem \ref{thm:forwError}:
    {\fontsize{7.5}{8}
    \begin{equation*}
    \inf_{g\in \mathcal{G}_{A}}\E_{X,Y}[(Y_t-g(X_{A}^{\xtpast},Y_{\ytpast}))^2] \leq 
    \sum_{k=1}^K TE_{X_k\rightarrow Y | X_{A_k} },
    \end{equation*}
    }
    where $K$ is the number of iterations of the while cycle and $X_k,X_{A_k}$ are respectively the selected feature and the set of selected features during the k-th iteration. Again, substituting each transfer entropy term with the high-probability confidence interval of its estimator from Equation \ref{eq:CMIconcentration}, and recalling that at the end of the algorithm the cumulative selected information {\fontsize{7.5}{8}$\sum_{k=1}^K \hat{TE}_{X_k\rightarrow Y | X_{A_k} }\geq \frac{\delta}{2B^2}$}, the result follows.
\end{proof}

\section{Experimental analysis: additional details and results}\label{app:experiments}

In this section, the experiments performed both with synthetic data and with real-world datasets are discussed in detail. We ran the experiments on a server with 88 Intel(R) Xeon(R) CPU E7-8880 v4 @ 2.20GHz cpus and 94 GB of RAM. We used 8 cpus of the server and the overall computational time was less than 24 hours. As reported in the main paper, code and datasets can be found at \url{https://www.dropbox.com/scl/fo/ctvxqh8q131f8sim4s22o/h?dl=0&rlkey=5hsbpold82bi8vilu44ubb508} and they will be available on GitHub after the double-blind review process.

\subsection{Synthetic Experiments}

In the main paper, Figure \ref{fig:linearSynth} shows the three causal graphs considered for the three, five and ten-dimensional synthetic experiments, together with the time lag and the coefficient selected to build the benchmark dataset. The graphs are designed to consider some classical challenges of time-series causal discovery: autocorrelation, time lags, noise, and spurious association due to common causal confounders. As a first experiment, in all three cases, $N=300$ samples have been drawn (corresponding to considering time series of length equal to $300$). In particular, each node without causal parents has been associated with a feature of the real-world experiment on droughts that considers fifteen features discussed in the next subsection ($X_0$ for the three-dimensional dataset, $X_1,X_3$ for the five-dimensional dataset, $X_0,X_5,X_7$ for the ten-dimensional one). In this way, a realistic time series is considered for each of these nodes. Then, following the causal dependencies, the other features and the target have been designed, considering a linear contribution of each of their (lagged) parent variables through the coefficient reported in Figure \ref{fig:linearSynth}. Moreover, Gaussian noise with variance $\sigma=0.1$ has been added to each feature. 
Considering a varying seed from $0$ to $9$, the algorithms have been run ten times to produce the results in terms of true positive rate (TPR) and false positive rate (TPR). In particular, forward and backward CMI feature selection have been considered as baselines, applying the algorithms provided in~\citep{Beraha2019}. Moreover, the PCMCI algorithm~\citep{runge2019detecting} has been considered as state-of-the-art causal discovery method, applying the implementation provided by the authors of the method in the Pyhton library Tigramite (\url{https://github.com/jakobrunge/tigramite.git}). Given the linear nature of the problem and the much faster computational time, PCMCI algorithm has been run considering linear independence tests based on partial correlations, leaving the other parameters with default values. Moreover, to detect the majority of causal links, the proposed algorithms \emph{Forward and Backward TEFS}, have been run considering respectively a small value of the maximum information loss $\delta=10^{-6}$ and a large value of the required minimum information gain $\Delta=100$. 

To further analyse the impact of the coefficients, the number of samples, the noise and the time lags, different values of each of the parameters have been considered, keeping the others fixed as discussed above and repeating the experiments for each configuration of value ten times, with seeds from $0$ to $9$. In particular, the coefficients have been randomly sampled five additional times from a uniform distribution in the interval $[-1,-0.5]\cup [0.5,1]$ (not considering the interval around zero $[-0.5,0.5]$ to avoid too weak causal links). The variance of the Gaussian noise has been increased five times, considering the interval $[0.1,0.3,0.5,0.7,0.9]$ for the three-dimensional setting and $[0.01,0.05,0.1,0.15,0.2]$ for the five and ten-dimensional ones (to avoid too noisy experiments). Then, the number of samples has been varied, considering five increasing values $[100,200,300,400,500]$. Finally, the maximum time lag that the algorithms should consider has been tested, considering the exact value $2$ together with $3,4,5$. As discussed in the main paper, in most cases the proposed methods are able to correctly detect the correct causal links, with TPR better or equivalent to the PCMCI algorithm, controlling the FPR better. Moreover, the proposed methods are sufficiently robust \wrt increase of noise, a decrease of the number of samples and with different causal links (identified by different random coefficients). Additionally, they have a satisfactory performance also considering different time depths, with a worsening of the TPR only with large depth. 

In the main paper, Table \ref{tab:synth3D} shows the TPR and FPR for all the considered algorithms for the three-dimensional setting. In this section, Table \ref{tab:synth5D} and \ref{tab:synth10D} respectively show the same results for the five and ten-dimensional synthetic experiments. The TPR and FPR have been obtained considering the true causal links between the features and the target, for all ten repetitions of each experiment with different seeds. As an example, in the three-dimensional experiment, there are two correct and one false causal link between the features and the target (also considering its autoregressive component). Therefore, if the links are correctly detected during all ten repetitions, the TPR is equal to $\frac{20}{20}=1$. On the contrary, if an algorithm detects one incorrect causal link out of the repetitions, the FPR is equal to $\frac{1}{10}=0.1$. Finally, we consider the PCMCI to detect a link between a feature and the target if the algorithm detects one link between them, independently from the time lag that it selects. Indeed, PCMCI algorithm tries to identify not only the causal features but also the time lags that have causal effects on the actual target (see Remark \ref{rem:variant} in the main paper for a broader discussion on the choice to consider all the contribution of a feature as a whole for the forward and backward TEFS). 

\begin{table*}[t]
\caption{Five-dimensional experiments on synthetic datasets. Each experiment has been repeated ten times, with different seeds, varying one parameter at a time.
\label{tab:synth5D}}
\centering 
\centering
\resizebox{\textwidth}{!}
{\begin{tabular}{@{}cccccc@{}} \hline 
\ & Forward CMI & Backward CMI & PCMCI & \cellcolor{lightgray!25} Forward TEFS (\textbf{ours}) & \cellcolor{lightgray!25} Backward TEFS (\textbf{ours}) \\\hline 
\textbf{Benchmark} & & & & \cellcolor{lightgray!25} & \cellcolor{lightgray!25} \\
TPR & $0.47$ & $0.47$ & $0.67$ & \cellcolor{lightgray!25} $0.93$ & \cellcolor{lightgray!25} $0.93$\\
FPR & $0.50$ & $0.50$ & $0.40$ & \cellcolor{lightgray!25} $0$ & \cellcolor{lightgray!25} $0$ \\
\hline
\textbf{Increasing Noise} & & & & \cellcolor{lightgray!25} & \cellcolor{lightgray!25} \\
TPR & $[0.67,0.60,0.47,0.47,0.40]$ & $[0.67,0.60,0.47,0.47,0.40]$  & $[0.37, 0.43, 0.67, 0.80, 0.83]$ & \cellcolor{lightgray!25} $[1,1,0.93,0.87,0.77]$ & \cellcolor{lightgray!25} $[1,1,0.93,0.87,0.77]$\\
FPR & $[0.50,0.50,0.50,0.50,0.50]$ & $[0.50,0.50,0.50,0.50,0.50]$ & $[0.05, 0.05, 0.40, 0.65, 0.50]$ & \cellcolor{lightgray!25} $[0, 0, 0, 0, 0]$ & \cellcolor{lightgray!25} $[0, 0, 0, 0, 0]$ \\
\hline
\textbf{Increasing $\tau$} & & & & \cellcolor{lightgray!25} & \cellcolor{lightgray!25} \\
TPR & $[0.67,0.33,0.30,0.27]$ & $[0.67,0.33,0.30,0.27]$  & $[0.37,0.03,0.63,0.73]$ & \cellcolor{lightgray!25} $[1,0.67,0.53,0.47]$ & \cellcolor{lightgray!25} $[1,0.67,0.67,0.67]$\\
FPR & $[0.50,0.50,0.50,0.50]$ & $[0.50,0.50,0.50,0.50]$ & $[0.05,0.05,0.60,0.60]$ & \cellcolor{lightgray!25} $[0,0.5,0,0]$ & \cellcolor{lightgray!25} $[0,0.5,0,0]$ \\
\hline
\textbf{Increasing N} & & & & \cellcolor{lightgray!25} & \cellcolor{lightgray!25} \\
TPR & $[0.43, 0.57, 0.47, 0.63, 0.60]$ & $[0.43, 0.57, 0.47, 0.63, 0.60]$ & $[0.30,0.50,0.67,0.77,0.57]$ & \cellcolor{lightgray!25} $[0.70,0.90,0.93,1,0.97]$ & \cellcolor{lightgray!25} $[0.73,0.90,0.93,1,0.97]$\\
FPR & $[0.50,0,0.50,0.50,0.50]$ & $[0.50,0,0.50,0.50,0.50]$ & $[0,0.5,0.4,0.4,0.4]$ & \cellcolor{lightgray!25} $[0.2,0.05,0,0,0]$ & \cellcolor{lightgray!25} $[0.15,0.05,0,0,0]$ \\
\hline
\textbf{Random Coefficients} & & & & \cellcolor{lightgray!25} & \cellcolor{lightgray!25} \\
TPR & $[0.43,0.67,0.67,0.43,0.67]$ & $[0.43,0.67,0.67,0.43,0.67]$ & $[0.63,0.77,0.77,0.90,0.77]$ & \cellcolor{lightgray!25} $[0.90,1,1,0.67,1]$ & \cellcolor{lightgray!25} $[0.90,1,1,0.67,1]$\\
FPR & $[0.50,0,0.50,0.50,0.55]$ & $[0.50,0,0.50,0.50,0.50]$ & $[0.15,0.30,0.45,0.30,0.45]$ & \cellcolor{lightgray!25} $[0,0.05,0,0,0]$ & \cellcolor{lightgray!25} $[0,0,0,0,0]$ \\
\hline
\end{tabular}}
\end{table*}

\begin{table*}[t]
\caption{Ten-dimensional experiments on synthetic datasets. Each experiment has been repeated ten times, with different seeds, varying one parameter at a time.
\label{tab:synth10D}}
\centering
\centering
\resizebox{\textwidth}{!}
{\begin{tabular}{@{}cccccc@{}} \hline 
\ & Forward CMI & Backward CMI & PCMCI & \cellcolor{lightgray!25} Forward TEFS (\textbf{ours}) & \cellcolor{lightgray!25} Backward TEFS (\textbf{ours})\\\hline 
\textbf{Benchmark} & & & & \cellcolor{lightgray!25} & \cellcolor{lightgray!25} \\
TPR & $0.67$ & $0.67$ & $0.97$ & \cellcolor{lightgray!25} $1.0$ & \cellcolor{lightgray!25} $1.0$\\
FPR & $0.01$ & $0.01$ & $0.66$ & \cellcolor{lightgray!25} $0$ & \cellcolor{lightgray!25} $0$ \\
\hline
\textbf{Increasing Noise} & & & & \cellcolor{lightgray!25} & \cellcolor{lightgray!25} \\
TPR & $[0.67, 0.67, 0.67, 0.67, 0.67]$ & $[0.67, 0.67, 0.67, 0.67, 0.67]$ & $[1, 0.97, 0.97, 0.77, 0.87]$ & \cellcolor{lightgray!25} $[1, 1, 1, 1, 1]$ & \cellcolor{lightgray!25} $[1, 1, 1, 1, 1]$ \\
FPR & $[0,0,0.01, 0.09, 0.14]$ & $[0,0,0.01,0.09,0.13]$ & $[0.71, 0.57, 0.66, 0.57, 0.6]$ & \cellcolor{lightgray!25} $[0, 0, 0, 0, 0]$ & \cellcolor{lightgray!25} $[0, 0, 0, 0, 0]$  \\
\hline
\textbf{Increasing $\tau$} & & & & \cellcolor{lightgray!25} & \cellcolor{lightgray!25} \\
TPR & $[0.67, 0.67, 0.67,0.43]$ & $[0.67, 0.67, 0.67,0.43]$ & $[0.53, 0.97, 0.97, 1]$ & \cellcolor{lightgray!25} $[1, 1, 1, 0.7]$ & \cellcolor{lightgray!25} $[1, 1, 1, 0.7]$ \\
FPR & $[0.03, 0.01, 0, 0]$ & $[0.06, 0.01, 0, 0]$ & $[0.44, 0.66, 0.73, 0.83]$ & \cellcolor{lightgray!25} $[0.04, 0, 0, 0.01]$ & \cellcolor{lightgray!25} $[0.04, 0, 0, 0.01]$  \\
\hline
\textbf{Increasing N} & & & & \cellcolor{lightgray!25} & \cellcolor{lightgray!25} \\
TPR & $[0.67,0.67,0.67,0.67,0.67]$ & $[0.67,0.67,0.67,0.67,0.67]$ & $[0.67, 0.9, 0.97, 0.9, 0.9]$ & \cellcolor{lightgray!25} $[0.6, 0.97, 1,1,1]$ & \cellcolor{lightgray!25} $[0.93, 1, 1,1,1]$ \\
FPR & $[0.1,0.04,0.01,0.01,0]$ & $[0.2, 0.04, 0.01, 0.01, 0]$ & $[0.49,0.37,0.66,0.5,0.56]$ & \cellcolor{lightgray!25} $[0.03, 0.01, 0, 0, 0]$ & \cellcolor{lightgray!25} $[0.11, 0, 0, 0, 0]$  \\
\hline
\textbf{Random Coefficients} & & & & \cellcolor{lightgray!25} & \cellcolor{lightgray!25} \\
TPR & $[0.67,0.63,0.67,0.36,0.67]$ & $[0.67,0.63,0.67,0.36,0.67]$ & $[0.97,0.77,0.87,0.97,0.93]$ & \cellcolor{lightgray!25} $[1,0.47,0,0.67,1]$ & \cellcolor{lightgray!25} $[1,0.6,0.97,0.67,1]$ \\
FPR & $[0.01,0.14,0.23,0.19,0]$ & $[0.01,0.14,0.16,0.16,0]$ & $[0.66,0.36,0.54,0.16,0.63]$ & \cellcolor{lightgray!25} $[0,0,0,0,0]$ & \cellcolor{lightgray!25} $[0,0.03,0,0,0]$ \\
\hline
\end{tabular}}
\end{table*}

To inspect also the capability of the proposed methods to identify causal relationships with a large number of features, we also repeated the synthetic analysis with $D\in\{15,20,40,60,80,100\}$, preserving the relationships and coefficients of the first nine features and the target as in Figure \ref{fig:graph10} and adding independent variables that can only be autocorrelated or interact with each other. In particular, we iteratively add features in groups of three, where the first one is randomly sampled from a standard normal distribution, and the second one linearly depends (with coefficient randomly sampled from a uniform distribution in $[-1,1]$) on the value of the first of one timestep before plus Gaussian noise, and the third depends on the second of two timesteps before and on its value of three timesteps before (again, with random coefficients sampled from a uniform distribution in $[-1,1]$) plus Gaussian noise. This group of three variables increases the number of features without impacting the causal relationships between the features and the target. In this context, the experiments have been repeated ten times with fixed benchmark parameters as described for the ten-dimensional experiment, varying ten different seeds. The associated TPR and FPR are reported in Table \ref{tab:synthND} and show the capability of the proposed methods to control the FPR \wrt the PCMCI method, which is less accurate in discarding features, selecting more than ten features in high dimensions. Moreover, from the result, it is also possible to notice that the forward version of the TEFS algorithm is more stable in high-dimensional contexts, due to the huge number of features in the conditioning phase of the first iterations of the backward approach, which makes more difficult the estimate of the contribution of each feature on the target.

\begin{table*}[t]
\caption{Larger dimensional experiments on synthetic datasets. Each experiment has been repeated ten times, with different seeds.
\label{tab:synthND}}
\centering
\centering
\begin{tabular}{@{}cccccc@{}} \hline 
\ & Forward CMI & Backward CMI & PCMCI & \cellcolor{lightgray!25} Forward TEFS (\textbf{ours}) & \cellcolor{lightgray!25} Backward TEFS (\textbf{ours})\\\hline 
\textbf{$D=15$} & & & & \cellcolor{lightgray!25} & \cellcolor{lightgray!25} \\
TPR & $0.67$ & $0.67$ & $0.73$ & \cellcolor{lightgray!25} $1.0$ & \cellcolor{lightgray!25} $1.0$\\
FPR & $0.01$ & $0.01$ & $0.52$ & \cellcolor{lightgray!25} $0$ & \cellcolor{lightgray!25} $0$ \\
\textbf{$D=20$} & & & & \cellcolor{lightgray!25} & \cellcolor{lightgray!25} \\
TPR & $0.67$ & $0.67$ & $0.73$ & \cellcolor{lightgray!25} $1.0$ & \cellcolor{lightgray!25} $1.0$\\
FPR & $0.01$ & $0.01$ & $0.29$ & \cellcolor{lightgray!25} $0$ & \cellcolor{lightgray!25} $0$ \\
\textbf{$D=40$} & & & & \cellcolor{lightgray!25} & \cellcolor{lightgray!25} \\
TPR & $0.67$ & $0.67$ & $0.73$ & \cellcolor{lightgray!25} $1.0$ & \cellcolor{lightgray!25} $1.0$\\
FPR & $0.01$ & $0.01$ & $0.25$ & \cellcolor{lightgray!25} $0$ & \cellcolor{lightgray!25} $0$ \\
\hline
\textbf{$D=60$} & & & & \cellcolor{lightgray!25} & \cellcolor{lightgray!25} \\
TPR & $0.67$ & $0.67$ & $0.73$ & \cellcolor{lightgray!25} $1$ & \cellcolor{lightgray!25} $0.93$\\
FPR & $0.01$ & $0.01$ & $0.22$ & \cellcolor{lightgray!25} $0$ & \cellcolor{lightgray!25} $0.01$ \\
\hline 
\textbf{$D=80$} & & & & \cellcolor{lightgray!25} & \cellcolor{lightgray!25} \\
TPR & $0.67$ & $0.57$ & $0.73$ & \cellcolor{lightgray!25} $1.0$ & \cellcolor{lightgray!25} $0.90$\\
FPR & $0.01$ & $0.20$ & $0.20$ & \cellcolor{lightgray!25} $0$ & \cellcolor{lightgray!25} $0.18$ \\
\hline 
\textbf{$D=100$} & & & & \cellcolor{lightgray!25} & \cellcolor{lightgray!25} \\
TPR & $0.67$ & $0.53$ & $0.73$ & \cellcolor{lightgray!25} $1.0$ & \cellcolor{lightgray!25} $0.90$\\
FPR & $0.01$ & $0.28$ & $0.18$ & \cellcolor{lightgray!25} $0$ & \cellcolor{lightgray!25} $0.29$ \\
\hline
\end{tabular}
\end{table*}

\subsection{Real-World Experiments}
This subsection provides additional details on the real-world datasets and experiments performed in order to analyse the behaviour of the proposed algorithms in real-world scenarios. In particular, three datasets with five features have been first considered. These climate datasets are composed by climatological features and a scalar target. The targets represent the state of vegetation of three sub-basins of the Po River (Vegetation Health Index)\footnote{They have been extracted from "Peter Zellner. Vegetation health index - 231 m 8 days (version 1.0) [data set]. Eurac Research, 2022".}. The features are observational data of temperature and precipitation\footnote{Features have been retrieved from  "Richard C. Cornes, Gerard van der Schrier, Else J. M. van den Besselaar, and Philip D. Jones. An ensemble version of the e-obs temperature and precipitation data sets. Journal of Geophysical Research: Atmospheres, 123(17):9391–9409, 2018.", "K. Didan. Myd13q1 modis/aqua vegetation indices 16-day l3 global 250m sin grid v006 [data set]. NASA EOSDIS Land Processes DAAC., 2015." and averaged over the basin they refer to.}. Moreover, to consider a larger dataset, a similar dataset has been considered, with the same index as the target and fifteen features retrieved from the same sources and representing the precipitation and temperature over the sub-basin considered and its two neighbouring sub-basins. These four datasets have been designed by the authors and they are available in the repository of this work. It is expected that precipitation and temperature have an impact on the state of the vegetation, with some timesteps of lag. The timestep considered for each of the four datasets is a week and the four datasets contain $N=639$ train samples (from 2001 to 2014) and $N=228$ test samples (from 2015 to 2019).
In addition, to further analyse the behaviour of the proposed algorithms on a benchmark dataset, we consider a dataset composed of five variables available in an online causal repository (\url{https://causeme.uv.es/}), where we consider in turn each variable to be the target and the other four variables the candidate causal features (for a total of five experiments). In particular, in the repository mentioned above, we consider the dataset \emph{TESTWEATHNOISE} with $N=1000$ samples ($600$ samples for training and the following $400$ to test) and five variables, that contains classical causal discovery challenges (autocorrelation, time delays, non-linearity, observational noise). 

Together with the two proposed methods (\emph{Forward and Backward TEFS}), as done for the synthetic experiments, we consider the forward and backward CMI feature selection described in~\citep{Beraha2019}, together with the PCMCI algorithm~\citep{runge2019detecting}, considering both linear and non-linear (CMI-based) independence tests. Additionally, to also consider a baseline method related to classical Granger causality, we also consider the FullCI method implemented in the same library (\url{https://github.com/jakobrunge/tigramite.git}), considering again both the linear and non-linear versions. Since there is no ground truth on the causal relationships between the features and the target, we consider the $R^2$ test score performing linear regression as an index of the performance of the algorithms (considering as inputs also the last $M$ values of the target for our approaches and when selected by the PCMCI and Full\_CI algorithms). This is motivated by the nature of the proposed methodology, which is designed to select features through a causal quantity but finally aimed to select the most relevant features in terms of regression performance. Since the conditioning on the last values of the target itself is the basic concept to measure causality with transfer entropy, we also provide the $R^2$ test score considering only the autoregressive component. Moreover, we consider $L=M\in\{1,2,3\}$ as past timesteps considered by the algorithms, since in real-world problems there is no knowledge on the number of past values that have an impact on the current value of the target.

\begin{table*}[ht]
\caption{Real-world experiments for three climate datasets with five features and one climate dataset with fifteen features. Each experiment has been repeated three times, considering different time-depths $M=L\in\{1,2,3\}$. The table reports the set of IDs of the selected features and the corresponding $R^2$ test score.
\label{tab:ClimRealFull}}
\centering 
\centering
\begin{tabular}{@{}cccc@{}} 
\hline
& \multicolumn{3} {c} {\textbf{Climate, 5 features, first dataset}}\\
\ & $M=L=1$ & $M=L=2$ & $M=L=3$ \\
\hline 
Autoregression & $\{\}\rightarrow0.28$ & $\{\}\rightarrow0.28$ & $\{\}\rightarrow0.27$ \\
Forward CMI & $\{0,3,4\}\rightarrow0.34$ & $\{0,1\}\rightarrow0.36$ & $\{0,1\}\rightarrow0.35$ \\
Backward CMI & $\{0,1,2,3,4\}\rightarrow0.34$ & $\{0,3,4\}\rightarrow0.26$ & $\{0,1,3\}\rightarrow0.35$ \\
FullCI\_linear & $\{0,2,4\}\rightarrow0.45$ & $\{0,1,3,4\}\rightarrow0.47$  & $\{0,3,4\}\rightarrow0.48$ \\
FullCI\_nonLinear & $\{0,4\}\rightarrow0.44$ & $\{0,4\}\rightarrow0.42$  & $\{0,3,4\}\rightarrow0.48$ \\
PCMCI\_linear & $\{0,2,4\}\rightarrow0.45$ & $\{0,2,4\}\rightarrow0.43$  & $\{0,2,3,4\}\rightarrow0.47$ \\
PCMCI\_nonLinear & $\{0,1,2,4\}\rightarrow0.43$ & $\{0,2,3,4\}\rightarrow0.42$  & $\{0,1,3,4\}\rightarrow0.47$ \\
\cellcolor{lightgray!25} Forward TEFS (\textbf{ours}) & \cellcolor{lightgray!25} $\{0,4\}\rightarrow0.44$   & \cellcolor{lightgray!25} $\{0,4\}\rightarrow0.42$ & \cellcolor{lightgray!25} $\{0,4\}\rightarrow0.47$ \\
\cellcolor{lightgray!25} Backward TEFS (\textbf{ours}) & \cellcolor{lightgray!25} $\{0,4\}\rightarrow0.44$   & \cellcolor{lightgray!25} $\{0,4\}\rightarrow0.42$ & \cellcolor{lightgray!25} $\{0,4\}\rightarrow0.47$ \\
\hline
& \multicolumn{3} {c} {\textbf{Climate, 5 features, second dataset}}\\
\ & $M=L=1$ & $M=L=2$ & $M=L=3$ \\
\hline 
Autoregression & $\{\}\rightarrow0.29$ & $\{\}\rightarrow0.30$ & $\{\}\rightarrow0.29$ \\
Forward CMI & $\{0,1,2,3,4\}\rightarrow0.24$ & $\{1,2,3,4\}\rightarrow0.23$ & $\{2,3\}\rightarrow0.25$ \\
Backward CMI & $\{0,1,2,3,4\}\rightarrow0.24$ & $\{1,2,3,4\}\rightarrow0.23$ & $\{2,3\}\rightarrow0.25$ \\
FullCI\_linear & $\{0,1,2\}\rightarrow0.41$ & $\{0,2,4\}\rightarrow0.45$ & $\{2,3,4\}\rightarrow0.46$ \\
FullCI\_nonLinear & $\{0,1,2\}\rightarrow0.41$ & $\{2\}\rightarrow0.42$ & $\{2\}\rightarrow0.47$ \\
PCMCI\_linear & $\{1,2,3\}\rightarrow0.43$ & $\{0,1,2,3\}\rightarrow0.47$ & $\{1,2,3\}\rightarrow0.47$ \\
PCMCI\_nonLinear & $\{1,2,3,4\}\rightarrow0.43$ & $\{1,2,3,4\}\rightarrow0.40$ & $\{1,2,3,4\}\rightarrow0.46$ \\
\cellcolor{lightgray!25} Forward TEFS (\textbf{ours}) & \cellcolor{lightgray!25} $\{2\}\rightarrow0.43$  & \cellcolor{lightgray!25} $\{2\}\rightarrow0.42$ & \cellcolor{lightgray!25} $\{2\}\rightarrow0.47$ \\
\cellcolor{lightgray!25} Backward TEFS (\textbf{ours}) & \cellcolor{lightgray!25} $\{2\}\rightarrow0.43$  & \cellcolor{lightgray!25} $\{2\}\rightarrow0.42$  & \cellcolor{lightgray!25} $\{2\}\rightarrow0.47$ \\
\hline
& \multicolumn{3} {c} {\textbf{Climate, 5 features, third dataset}}\\
\ & $M=L=1$ & $M=L=2$ & $M=L=3$ \\
\hline 
Autoregression & $\{\}\rightarrow0.29$ & $\{\}\rightarrow0.30$ & $\{\}\rightarrow0.29$ \\
Forward CMI & $\{0,1,2,3\}\rightarrow0.17$ & $\{0,1,2,3\}\rightarrow0.19$ & $\{0,1,2\}\rightarrow0.19$ \\
Backward CMI & $\{0,1,2,3\}\rightarrow0.17$ & $\{0,1,2,3\}\rightarrow0.19$ & $\{2,3,4\}\rightarrow0.15$ \\
FullCI\_linear & $\{1,2,3,4\}\rightarrow0.44$ & $\{0,1,2\}\rightarrow0.40$ & $\{0,1,2\}\rightarrow0.45$ \\
FullCI\_nonLinear & $\{1,2,3\}\rightarrow0.40$ & $\{1\}\rightarrow0.38$ & $\{1\}\rightarrow0.43$ \\
PCMCI\_linear & $\{1,2\}\rightarrow0.40$ & $\{1,2,3\}\rightarrow0.40$ & $\{1,2\}\rightarrow0.44$ \\
PCMCI\_nonLinear & $\{1,2\}\rightarrow0.40$ & $\{1,2,4\}\rightarrow0.41$ & $\{1,2\}\rightarrow0.44$ \\
\cellcolor{lightgray!25} Forward TEFS (\textbf{ours}) & \cellcolor{lightgray!25} $\{1,2\}\rightarrow0.40$ & \cellcolor{lightgray!25} $\{1\}\rightarrow0.38$ & \cellcolor{lightgray!25} $\{1\}\rightarrow0.43$ \\
\cellcolor{lightgray!25} Backward TEFS (\textbf{ours}) & \cellcolor{lightgray!25} $\{1,2\}\rightarrow0.40$ & \cellcolor{lightgray!25} $\{1\}\rightarrow0.38$ & \cellcolor{lightgray!25} $\{1\}\rightarrow0.43$ \\
\hline
& \multicolumn{3} {c} {\textbf{Climate, 15 features}}\\
\ & $M=L=1$ & $M=L=2$ & $M=L=3$ \\
\hline 
Autoregression & $\{\}\rightarrow0.28$ & $\{\}\rightarrow0.28$ & $\{\}\rightarrow0.27$ \\
Forward CMI & $\{0,3,4,11,12\}\rightarrow0.29$ & $\{0,2,11\}\rightarrow0.30$ & $\{0,5\}\rightarrow0.28$ \\
Backward CMI & $\{0,2,3,11\}\rightarrow0.29$ & $\{0,3,4\}\rightarrow0.26$ & $\{1,2,3,12\}\rightarrow0.26$ \\
FullCI\_linear & $\{2,5,7,12,14\}\rightarrow0.45$ & $\{5,7,11\}\rightarrow0.47$ & $\{12\}\rightarrow0.32$ \\
FullCI\_nonLinear & $\{12\}\rightarrow0.32$ & $\{\}\rightarrow0.28$ & $\{13\}\rightarrow0.29$ \\
PCMCI\_linear & $\{0,2,7\}\rightarrow0.45$ & $\{0,2,7,9,13\}\rightarrow0.42$ & $\{2,7\}\rightarrow0.48$ \\
PCMCI\_nonLinear & $\{2,7\}\rightarrow0.44$ & $\{0,2,7,9,10\}\rightarrow0.41$ & $\{0,7,12,13\}\rightarrow0.48$ \\
\cellcolor{lightgray!25} Forward TEFS (\textbf{ours}) & \cellcolor{lightgray!25} $\{0,7\}\rightarrow0.44$  & \cellcolor{lightgray!25} $\{0,7\}\rightarrow0.43$ & \cellcolor{lightgray!25} $\{7\}\rightarrow0.47$ \\
\cellcolor{lightgray!25} Backward TEFS (\textbf{ours}) & \cellcolor{lightgray!25} $\{0,11\}\rightarrow0.44$ & \cellcolor{lightgray!25} $\{0,11\}\rightarrow0.42$ & \cellcolor{lightgray!25} $\{11\}\rightarrow0.43$ \\
\hline
\end{tabular}
\end{table*}

\begin{table*}[ht]
\caption{Real-world experiments for benchmark dataset, with five variables, considering one variable at time as target. Each experiment has been repeated three times, considering different time-depths $M=L\in\{1,2,3\}$. The table reports the set of IDs of the selected features and the corresponding $R^2$ test score.
\label{tab:BenchRealFull}}
\centering 
\centering
\begin{tabular}{@{}cccc@{}} 
\hline
& \multicolumn{3} {c} {\textbf{Benchmark, variable 0 as target}}\\
\ & $M=L=1$ & $M=L=2$ & $M=L=3$ \\
\hline 
Autoregression & $\{\}\rightarrow0.10$ & $\{\}\rightarrow0.12$ & $\{\}\rightarrow0.12$ \\
Forward CMI & $\{2\}\rightarrow0.01$ & $\{3\}\rightarrow0.01$ & $\{1\}\rightarrow0.01$ \\
Backward CMI & $\{1,3\}\rightarrow0.01$ & $\{3\}\rightarrow0.01$ & $\{1\}\rightarrow0.01$ \\
FullCI\_linear & $\{\}\rightarrow0.10$ & $\{\}\rightarrow0.12$  & $\{\}\rightarrow0.12$ \\
FullCI\_nonLinear & $\{\}\rightarrow0.10$ & $\{\}\rightarrow0.12$  & $\{\}\rightarrow0.12$ \\
PCMCI\_linear & $\{\}\rightarrow0.10$ & $\{\}\rightarrow0.12$  & $\{\}\rightarrow0.12$ \\
PCMCI\_nonLinear & $\{\}\rightarrow0.10$ & $\{\}\rightarrow0.12$  & $\{\}\rightarrow0.12$ \\
\cellcolor{lightgray!25} Forward TEFS (\textbf{ours}) & \cellcolor{lightgray!25} $\{\}\rightarrow0.10$  & \cellcolor{lightgray!25} $\{\}\rightarrow0.12$ & \cellcolor{lightgray!25} $\{\}\rightarrow0.12$ \\
\cellcolor{lightgray!25} Backward TEFS (\textbf{ours}) & \cellcolor{lightgray!25} $\{\}\rightarrow0.10$   & \cellcolor{lightgray!25} $\{\}\rightarrow0.12$ & \cellcolor{lightgray!25} $\{\}\rightarrow0.12$ \\
\hline
& \multicolumn{3} {c} {\textbf{Benchmark, variable 1 as target}}\\
\ & $M=L=1$ & $M=L=2$ & $M=L=3$ \\
\hline 
Autoregression & $\{\}\rightarrow0$ & $\{\}\rightarrow0$ & $\{\}\rightarrow0$ \\
Forward CMI & $\{0\}\rightarrow0$ & $\{2,3\}\rightarrow0$ & $\{3\}\rightarrow0$ \\
Backward CMI & $\{2,3\}\rightarrow0$ & $\{2,3\}\rightarrow0$ & $\{3\}\rightarrow0$ \\
FullCI\_linear & $\{\}\rightarrow0$ & $\{2,4\}\rightarrow0$ & $\{2,4\}\rightarrow0$ \\
FullCI\_nonLinear & $\{\}\rightarrow0$ & $\{\}\rightarrow0$ & $\{4\}\rightarrow0$ \\
PCMCI\_linear & $\{\}\rightarrow0$ & $\{2,4\}\rightarrow0$ & $\{2,4\}\rightarrow0$ \\
PCMCI\_nonLinear & $\{\}\rightarrow0$ & $\{2\}\rightarrow0$ & $\{2\}\rightarrow0$ \\
\cellcolor{lightgray!25} Forward TEFS (\textbf{ours}) & \cellcolor{lightgray!25} $\{\}\rightarrow0$  & \cellcolor{lightgray!25} $\{\}\rightarrow0$ & \cellcolor{lightgray!25} $\{\}\rightarrow0$ \\
\cellcolor{lightgray!25} Backward TEFS (\textbf{ours}) & \cellcolor{lightgray!25} $\{\}\rightarrow0$  & \cellcolor{lightgray!25} $\{\}\rightarrow0$  & \cellcolor{lightgray!25} $\{\}\rightarrow0$ \\
\hline
& \multicolumn{3} {c} {\textbf{Benchmark, variable 2 as target}}\\
\ & $M=L=1$ & $M=L=2$ & $M=L=3$ \\
\hline 
Autoregression & $\{\}\rightarrow0.01$ & $\{\}\rightarrow0.01$ & $\{\}\rightarrow0.03$\\
Forward CMI & $\{3,4\}\rightarrow0.09$ & $\{3\}\rightarrow0.07$ & $\{0,3\}\rightarrow0.07$ \\
Backward CMI & $\{3,4\}\rightarrow0.09$ & $\{3\}\rightarrow0.07$ & $\{0,3\}\rightarrow0.07$ \\
FullCI\_linear  & $\{3,4\}\rightarrow0.09$ & $\{3,4\}\rightarrow0.13$ & $\{0,3,4\}\rightarrow0.13$ \\
FullCI\_nonLinear  & $\{3,4\}\rightarrow0.09$ & $\{3,4\}\rightarrow0.13$ & $\{3,4\}\rightarrow0.12$ \\
PCMCI\_linear  & $\{3,4\}\rightarrow0.09$ & $\{3,4\}\rightarrow0.13$ & $\{3,4\}\rightarrow0.12$ \\
PCMCI\_nonLinear  & $\{3,4\}\rightarrow0.09$ & $\{3,4\}\rightarrow0.13$ & $\{3,4\}\rightarrow0.12$ \\
\cellcolor{lightgray!25} Forward TEFS (\textbf{ours}) & \cellcolor{lightgray!25} $\{3,4\}\rightarrow0.09$ & \cellcolor{lightgray!25} $\{4\}\rightarrow0.09$ & \cellcolor{lightgray!25} $\{3\}\rightarrow0.11$ \\
\cellcolor{lightgray!25} Backward TEFS (\textbf{ours}) & \cellcolor{lightgray!25} $\{3,4\}\rightarrow0.09$ & \cellcolor{lightgray!25} $\{4\}\rightarrow0.09$ & \cellcolor{lightgray!25} $\{3\}\rightarrow0.11$ \\
\hline
& \multicolumn{3} {c} {\textbf{Benchmark, variable 3 as target}}\\
\ & $M=L=1$ & $M=L=2$ & $M=L=3$ \\
\hline 
Autoregression & $\{\}\rightarrow0.01$ & $\{\}\rightarrow0.01$ & $\{\}\rightarrow0.01$\\
Forward CMI & $\{0,4\}\rightarrow0.01$ & $\{1,2\}\rightarrow0$ & $\{2\}\rightarrow0$\\
Backward CMI & $\{2\}\rightarrow0$ & $\{1,2\}\rightarrow0$ & $\{2\}\rightarrow0$\\
FullCI\_linear & $\{2,4\}\rightarrow0.01$ & $\{2,4\}\rightarrow0.03$ & $\{2,4\}\rightarrow0.03$\\
FullCI\_nonLinear & $\{2\}\rightarrow0.01$ & $\{\}\rightarrow0.01$ & $\{1,2\}\rightarrow0.01$\\
PCMCI\_linear & $\{2,4\}\rightarrow0.01$ & $\{2,4\}\rightarrow0.03$ & $\{2,4\}\rightarrow0.03$\\
PCMCI\_nonLinear & $\{2\}\rightarrow0.01$ & $\{2,4\}\rightarrow0.03$ & $\{2,4\}\rightarrow0.03$\\
\cellcolor{lightgray!25} Forward TEFS (\textbf{ours}) & \cellcolor{lightgray!25} $\{4\}\rightarrow0.02$ & \cellcolor{lightgray!25} $\{\}\rightarrow0.01$ & \cellcolor{lightgray!25} $\{\}\rightarrow0.01$ \\
\cellcolor{lightgray!25} Backward TEFS (\textbf{ours}) & \cellcolor{lightgray!25} $\{4\}\rightarrow0.02$ & \cellcolor{lightgray!25} $\{\}\rightarrow0.01$ & \cellcolor{lightgray!25} $\{\}\rightarrow0.01$ \\
\hline
& \multicolumn{3} {c} {\textbf{Benchmark, variable 4 as target}}\\
\ & $M=L=1$ & $M=L=2$ & $M=L=3$ \\
\hline 
Autoregression & $\{\}\rightarrow0$ & $\{\}\rightarrow0$ & $\{\}\rightarrow0$ \\
Forward CMI & $\{2,3\}\rightarrow0$ & $\{2,3\}\rightarrow0.01$ & $\{2\}\rightarrow0.03$ \\
Backward CMI & $\{2,3\}\rightarrow0$ & $\{2,3\}\rightarrow0.01$ & $\{2\}\rightarrow0.03$ \\
FullCI\_linear & $\{2,3\}\rightarrow0$ & $\{2,3\}\rightarrow0.01$ & $\{2,3\}\rightarrow0.03$ \\
FullCI\_nonLinear & $\{3\}\rightarrow0$ & $\{2\}\rightarrow0.03$ & $\{2,3\}\rightarrow0.03$ \\
PCMCI\_linear & $\{2,3\}\rightarrow0$ & $\{2,3\}\rightarrow0.01$ & $\{2,3\}\rightarrow0.03$ \\
PCMCI\_nonLinear & $\{\}\rightarrow0$ & $\{2,3\}\rightarrow0.01$ & $\{2,3\}\rightarrow0.03$ \\
\cellcolor{lightgray!25} Forward TEFS (\textbf{ours}) & \cellcolor{lightgray!25} $\{2\}\rightarrow0$  & \cellcolor{lightgray!25} $\{2\}\rightarrow0.03$ & \cellcolor{lightgray!25} $\{2\}\rightarrow0.03$ \\
\cellcolor{lightgray!25} Backward TEFS (\textbf{ours}) & \cellcolor{lightgray!25} $\{2\}\rightarrow0$ & \cellcolor{lightgray!25} $\{2\}\rightarrow0.03$ & \cellcolor{lightgray!25} $\{2\}\rightarrow0.03$ \\
\hline
\end{tabular}
\end{table*}

Table \ref{tab:ClimRealFull} contains the subset of selected features and the $R^2$ test score with linear regression of the three climatological datasets with five features and the one with fifteen features, considering the three values of $L,M$ and the feature selection and causal discovery algorithms discussed above. From the results it is possible to see that the proposed algorithms almost always select the same set of features, showing a stable behaviour. Moreover, as already discussed, the two proposed methods tend to identify the subset of the most (causally) relevant features, preserving the majority of information, that is highlighted by the competitive results. PCMCI and FullCI algorithms have a less stable behaviour and tend to identify larget subsets of features, sometimes slightly improving the performance in terms of $R^2$ score, at the cost of considering more features. Finally, Table \ref{tab:BenchRealFull} reports the subset of selected features and the associated $R^2$ test score with linear regression of the benchmark dataset, considering, in turn, each variable as a target and the others as candidate causal features. From the results, the first variable (\emph{variable 0}) seems to be an autoregressive process. The second variable (\emph{variable 1}) seems to be a random variable with no dependencies on its previous value or the previous values of any other feature. For the third variable (\emph{variable 2}) all methods obtain similar performances, showing that the autoregressive component is negligible for this variable and \emph{variable 3 and 4} are the relevant ones for its prediction. Regarding the fourth variable (\emph{variable 3}), it is mostly a noise signal, with the proposed approaches that do not identify significant features in the majority of cases, while PCMCI and its variants identify slight importance of \emph{variable 2,4}. Finally, for the fifth variable (\emph{variable 4}), considering one timestep before, the other variables and their previous value do not provide information for its prediction, while a little information seems to be provided by \emph{variable 2} considering two and three timesteps before.

\newpage

\end{document}